\theoremstyle{plain}
\newtheorem{theorem}{Theorem}[section]
\newtheorem{corollary}{Corollary}[section]
\newtheorem{lemma}{Lemma}[section]
\newtheorem{proposition}{Proposition}[section]
\theoremstyle{definition}
\newtheorem{definition}{Definition}
\theoremstyle{remark}
\newtheorem{remark}{Remark}
\title{Exploratory Utility Maximization Problem with Tsallis Entropy\thanks{The data that support the findings of this study are available from the corresponding author upon reasonable request.
The authors declare that they have no known competing financial interests or personal relationships that could have appeared to influence the work reported in this paper.}  }
\author{
Ziyi Chen
\thanks{Department of Mathematics,
Southern University of Science and Technology, Shenzhen, China.  E-mail: 12231268@mail.sustech.edu.cn.
}
\and
Jia-Wen Gu
\thanks{
Corresponding author. Department of Mathematics,
Southern University of Science and Technology, Shenzhen, China.  E-mail: gujw@sustech.edu.cn. Supported in part by the Key Project of National Natural Science Foundation of China 72432005, Guangdong Basic and Applied Basic Research
Foundation 2023A1515030197, Shenzhen Humanities \& Social Sciences Key Research Bases.}
}
\date{}
\begin{document}
	\maketitle
	\begin{abstract}
		We study expected utility maximization problem with constant relative risk aversion utility function in a complete market under the reinforcement learning framework. To induce exploration, we introduce the Tsallis entropy regularizer, which generalizes the commonly used Shannon entropy. Unlike the classical Merton's problem, which is always well-posed and admits closed-form solutions, we find that the utility maximization exploratory problem is ill-posed in certain cases, due to over-exploration. With a carefully selected primary temperature function, we investigate two specific examples, for which we fully characterize their well-posedness and provide semi-closed-form solutions. 
        It is interesting to find that one example has the well-known Gaussian distribution as the optimal strategy, while the other features the rare Wigner semicircle distribution, which is equivalent to a scaled Beta distribution. The means of the two optimal exploratory policies coincide with that of the classical counterpart. In addition,  we examine the convergence of the value function and optimal exploratory strategy as the exploration vanishes. Finally, we design a reinforcement learning algorithm and conduct numerical experiments to demonstrate the advantages of reinforcement learning. 
	\end{abstract}
	
	
	\noindent\emph{Key words: Reinforcement learning; Utility maximization problem; Well-posedness; Tsallis entropy; Exploratory function}
	
	
	\section{Introduction}\label{sec:introduction}
	Merton \cite{merton1975optimum} is the first to consider financial markets in continuous time, studying the optimal investment strategy of a investor with constant relative risk aversion (CRRA) with the objective of maximizing terminal utility. In the absence of transaction costs and consumption, he derived the optimal strategy by solving the Hamilton-Jacobi-Bellman (HJB) equation, which leads to a constant ratio between risky and risk-free assets. This work provided a foundational framework for subsequent portfolio optimization studies. Subsequent variations of the model introduced factors such as recursive utility (e.g., Epstein and Zin \cite{epstein2013substitution}, Duffie and Epstein \cite{duffie1992stochastic}), transaction costs (e.g., Shreve and Soner \cite{shreve1994optimal}, Liu et al.\cite{liu2002optimal}, Dai and Yi \cite{dai2009finite}) and constraints on state or control (e.g., Elie et al. \cite{elie2008optimal}, Lim et al. \cite{lim2011optimal}). It is worth noting that in practical applications the model parameters need to be estimated and then plugged into the optimal control function. This represents a model-dependent classical control approach. The disadvantage of this approach is that the plug-in strategy is highly sensitive to the model parameters and may no longer be optimal as market conditions fluctuate.\\
	\indent To overcome the issue of parameter estimation, researchers have turned to reinforcement learning (RL). RL is a model-free machine learning approach that learns optimal strategies through interaction with the environment. It is based on a reward-feedback mechanism, where an agent interacts with the environment by taking actions, receiving rewards or penalties, and adjusting its behavior based on this feedback. The goal is to maximize the cumulative long-term reward through exploration and exploitation. For the discrete case, the RL algorithms are already well-developed (e.g., Sutton \cite{sutton2018reinforcement}, Szepesv{\'a}ri and Csaba \cite{szepesvari2022algorithms}). However, for the continuous case, particularly with stochastic dynamics, RL is still under development. Current research in continuous time can be divided into two categories: one focuses on theoretical and algorithmic studies, while the other applies RL to specific problems. On the theoretical side, Wang and Zhou \cite{wang2020reinforcement} are the first to propose an exploratory stochastic control framework in continuous time with RL. They randomize the classical control into a distributional strategy and introduce entropy regularization, obtaining the optimal feedback control distribution, which is Gaussian by solving the exploratory HJB equation. From the perspective of PDEs, Tang et al. \cite{tang2022exploratory} establish the well-posedness and regularity of the viscosity solution to the exploratory HJB equation, and show that the exploratory control problem converges to the classical stochastic control problem as exploration diminishes.  Jia and Zhou \cite{jia2022policy} propose a unified framework to study policy evaluation (PE), specifically to learn the value function given a policy. They present two methods for using martingale characterization to design PE algorithms. One method minimizes the so-called martingale loss function, while the other is based on a system of equations known as the ``martingale orthogonality conditions'' with test functions. Immediately afterward, Jia and Zhou \cite{jia2022policy_a,jia2023q} devise two actor-critic algorithms based on the policy gradient and q-learning.  
	On the application side, Wang and Zhou \cite{wang2020continuous} and Dai et al. \cite{dai2023learning} study the portfolio optimization problem under the MV criterion using reinforcement learning. Due to the time inconsistency arising from the MV problem,  Wang and Zhou \cite{wang2020continuous} solve for the pre-commitment distributional strategy, while Dai et al. \cite{dai2023learning} solve for the equilibrium distributional strategy. Wu and Li \cite{wu2024reinforcement} employ the proposed RL method to address the continuous-time mean-variance portfolio optimization problem in a regime-switching market. Dong \cite{dong2024randomized} and Dai et al. \cite{dai2024learning} introduce RL into the optimal stopping problem.\\
	\indent Classical expected utility maximization problems with CRRA utility function are always well-posed and admit explicit solutions. However, when exploration is introduced, utility maximization problems become more complex.  { Several studies have examined the problem of utility maximization under exploration. For instance, Jiang et al. \cite{jiang2022reinforcement} focus on  a log utility function and derive a closed-form solution. Meanwhile, Dai et al. \cite{dai2023learning} investigate the recursive entropy-regularized utility maximization problem, and Bo et al. \cite{bo2024continuous} study general reinforcement learning in jump-diffusion models (not necessarily LQ) by featuring q-learning under Tsallis entropy regularization. However, to the best of our knowledge, the well-posedness of the exploratory utility maximization problem has not yet been thoroughly studied.} In this paper, we will show that in some cases this problem is actually not well-posed, due to over-exploration. We are surprised to find that the choice of primary temperature function, the combination of market parameters, and the selection of the utility function all influence the well-posedness of the problem and attainability of (semi-)closed-form strategies.
    Our main contribution is to fully characterize the well-posedness of the exploratory utility maximization problem under Tsallis entropy and derive (semi-)closed-form solutions with a carefully selected primary temperature
function.\\
	\indent To encourage exploration, as Bo et al. \cite{bo2024continuous} and Donnelly et al. \cite{donnelly2024exploratory}, we introduce the Tsallis entropy (Tsallis \cite{tsallis1988possible}), which is a generalization of the Shannon entropy. Incorporating different regularization terms induces different optimal distribution strategies. For instance, Shannon entropy typically leads to a Gaussian distribution, while Choquet regularizers can generate a variety of widely used exploratory samplers, including exponential, uniform, and Gaussian distributions (Han et al.\cite{han2023choquet}, Guo et al. \cite{guo2023exploratory}). With a carefully selected primary temperature function, we provide two examples under the Tsallis entropy with semi-closed-form solutions. This temperature function ensures that the exploratory HJB equation is homogeneous, allowing the PDE to be reduced to an ODE through dimensionality reduction. We offer a detailed discussion of the existence and uniqueness of the ODE solution and characterize its properties. Notably, one example illustrates that the optimal distribution induced by the Tsallis entropy index being 3 is no longer Gaussian, but rather a semicircular distribution, which can be viewed as a scaled Beta distribution. In addition, we investigate the convergence of the value function and optimal exploratory strategy as the exploration vanishes. Finally, we adopt the actor-critic algorithm proposed by Jia and Zhou \cite{jia2022policy,jia2022policy_a} to carry out the numerical experiment and present substantial simulation results. \\
	\indent The remainder of the paper is organized as follows. In Section \ref{sec:models}, we present the model setup and discuss how to choose an appropriate primary temperature function. In Section \ref{sec:optimal_strategy}, we derive the optimal distributional strategy corresponding to Tsallis entropy by solving the exploratory HJB equation. In addition, we provide two specific examples with a semi-closed-form solution and also study the convergence properties of both the strategy and the value function. We introduce the RL algorithm design in Section \ref{sec:RL} and provide the corresponding numerical results in Section \ref{sec:simulation}. Finally, we conclude this paper in Section \ref{sec:conclusions}.
	
	\section{Problem Formulation}\label{sec:models}
	\subsection{Model Setup}
	Assume there are two assets in a market without transaction costs: one is a bond with risk-free interest rate $r$ and the other is a risky stock. The price process of the risky stock is governed by a geometric Brownian motion:
	\begin{equation}\label{eq:GBM}
		dS_t = \mu S_tdt+ \sigma S_t dB_t, ~ S_0=s_0>0,
	\end{equation}
	where $\mu \in R$ is mean, $\sigma>0$ is volatility, and $B_t$ is a standard one-dimensional Brownian motion defined on a filtered probability space $(\Omega, \mathcal{F},\{\mathcal{F}_t\}_{t \geq 0}, \mathbb P)$ that satisfies the usual conditions. 
	Denote the fraction of total wealth invested in the stock at time $t$ by $u_t$. Then the corresponding self-financing wealth process $W_t$ satisfies the following SDE:
	\begin{equation}\label{cl_wealth}
		dW_t = [r+(\mu-r)u_t]W_t dt +\sigma u_t W_t dB_t, ~W_0=w_0>0.
	\end{equation}
	Let $T$ be the investment horizon. In classical Merton's problem, investors are to maximize the expected utility of the terminal wealth $W_T$:
	\begin{equation}\label{pb:classical}
		\max\limits_{u\in \mathcal{U}(t, w)}\mathbb E [U_p(W_T) \mid W_t=w]
	\end{equation} 
	where $U_p(\cdot)$ is a utility function and $\mathcal{U}(t, w)$ is  admissible control set defined as 
	\begin{equation*}
		\mathcal{U}(t,w)=\Big\{u_s \ {\rm is} \ \mathcal{F}_s-{\rm progressively \  measurable}: \mathbb E \Big[\int_{t}^{T}(u_sW_s)^2ds \mid W_t=w \Big]<+\infty\Big\}.
	\end{equation*}
	In this paper, we focus on the following constant relative risk aversion (CRRA) utility function defined for  $w > 0$:
	\begin{equation*}
		U_p(w)\triangleq\begin{cases}\frac{w^p}{p},&\text{if }p<1, p\neq0,\\\log w,&\text{if }p=0,\end{cases}
	\end{equation*}
	where we set $U_p(0)\triangleq -\infty$ if $p \leq 0$.\\ 
\indent	It is important to note that Problem $(\ref{pb:classical})$ represents a classical model-based control problem, where the parameters are assumed to be known and specified. We will seek an optimal strategy by reinforcement learning in the absence of knowledge of the parameters. The key idea is to incorporate randomization and exploration. That is, one interacts with the unknown environment through a randomized strategy to gather information, thereby obtaining better strategies. Following Wang et al. \cite{wang2020reinforcement}, we introduce the exploratory version of the wealth process. We first transform the real-valued control $u_t$ into a probability density-valued control denoted by $\pi_t$. At every time, the investor samples from $\pi_t$ to obtain an action $u_t$. Then the exploratory version of the wealth is given by
\begin{equation}\label{eq:ep_wealth}
	dW_t^\pi = \Big[r+(\mu -r)\int_{\mathbb{R}}u\pi_t(u)du\Big]W_t^\pi dt+ \sigma W_t^\pi\sqrt{\int_{\mathbb{R}}u^2\pi_t(u)du}dB_t.
\end{equation}
The detailed derivation of (\ref{eq:ep_wealth}) is provided in Appendix \ref{subsec:derivation}. 
To encourage exploration, we introduce an entropy regularizer into the reward function:
\begin{equation*}
	J(t, w; \pi):= \mathbb E\Big\{U_p(W_T^\pi)+\int_{t}^{T}\lambda(s,W_s^\pi)\Big[\int_{\mathbb{R}}H_\beta(\pi_s(u))du\Big]ds \mid W_t=w \Big\},
\end{equation*}
where $\lambda(t,w)>0$ is a measurable primary temperature function representing the weight on exploration and $H_\beta(\cdot)$ is Tsallis entropy of the distribution given by
\begin{equation*}
	H_\beta(z) =\begin{dcases}
		\frac{1}{\beta -1}(z-z^{\beta}),\quad \beta>1,\\
		-z\log z,\quad \beta=1.
	\end{dcases}
\end{equation*}
When $\beta =1$, the Tsallis entropy reduces to the Shannon entropy. Now, we specify the admissible controls set as follows.
\begin{definition}
	A policy $\pi =\{\pi_s, t \leq s \leq T\}$ is called an admissible distributional control if
	\begin{enumerate}
		\item For each $t \leq s \leq T$, $\pi_s \in \mathcal{P}(\mathbb{R})$, where $\mathcal{P}(\mathbb{R})$ denotes the collection of probability density functions of absolutely continuous probability measures on $\mathbb{R}$. 
         \item For each $(t, w) \in [0, T)\times \mathbb{R}_+$, the SDE (\ref{eq:ep_wealth}) admits a unique strong solution $\{W_s^{\pi}, s \in [t, T]\}$, with $W_t^\pi=w$.    
		\item $\pi$  is a mapping from $[t,T]\times \Omega$ to $\mathcal{P}(\mathbb{R})$, such that $\int_{\Lambda}\pi_s(u) du$   is $\mathcal{F}_s$-{progressively  measurable} for any Borel set $\Lambda$ and $\mathbb E\left\{\int_{t}^{T}[\int_{\mathbb{R}}u^2\pi_s(u)du]ds \mid W_t^\pi=w \right\}<+\infty$. 
		\item $\mathbb E \left\{ |U_p(W_T^\pi)|+ \int_{t}^{T}\lambda(s,W_s^\pi)[\int_{\mathbb{R}}|H_\beta(\pi_s(u))|du]ds  \mid W_t^\pi=w \right\} <\infty$.
	\end{enumerate}
\end{definition}
The set of all admissible distributional controls starting at $(t,w)$ is denoted as $\Pi(t,w)$. Note that integrability in condition 3 requires that the mean and variance of the control $\pi_s$ exist and are finite for almost all $s \in [t, T]$. Our goal is to find an admissible control to maximize the reward function:  
\begin{equation}\label{pb_ex}
	V(t,w) = J(t,w;\hat{\pi}) =\max\limits_{\pi \in \Pi(t, w)}J(t,w;\pi),
\end{equation}
where $V$ is optimal value function.
\subsection{How to Choose Primary Temperature Function?} 
In reinforcement learning problems, the primary temperature function $\lambda(\cdot,\cdot)$ cannot generally be arbitrarily chosen, especially in the utility maximization problem. It can affect the well-posedness and solvability of exploratory control problems. We first discuss how to choose the appropriate candidate primary temperature function through some heuristic analysis. We begin by considering the simplest case, that is, $\lambda$ is only time dependent, and the entropy is the Shannon entropy. 
\begin{proposition}\label{pro_well_time}
    	Let $\lambda(t,w)=\lambda(t)$ be a positive continuous-time-dependent primary temperature function and $\beta=1$. The control problem $(\ref{pb_ex})$ is well-posed for $p \leq 0$ but becomes ill-posed for $0<p<1$. Moreover, when $p=0$,  a closed-form optimal strategy given by\begin{equation}\label{pi_0}
		\hat{\pi}(u|t,w)\sim \mathcal{N}\Big(\frac{\mu-r}{\sigma^2},\frac{\lambda(t)}{\sigma^2}\Big),
	\end{equation}
	and the correspondingly optimal value function is 
	\begin{equation*}
		V(t,w)=\log w +\Big[r+\frac{(\mu-r)^2}{2\sigma^2}\Big](T-t)+\int_{t}^{T}\frac{\lambda(s)}{2}\log\Big(\frac{2\pi \lambda(s)}{\sigma^2}\Big)ds.
	\end{equation*}
\end{proposition}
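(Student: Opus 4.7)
My approach combines an HJB/verification analysis (for the closed form at $p=0$ and well-posedness when $p\leq 0$) with an explicit divergent sequence (for ill-posedness when $0<p<1$). First, I would apply the dynamic programming principle to obtain the exploratory HJB
\begin{equation*}
V_t + rwV_w + \max_{\pi\in\mathcal P(\mathbb R)}\Big\{(\mu-r)wV_w\!\int u\pi(u)\,du + \tfrac{1}{2}\sigma^2 w^2 V_{ww}\!\int u^2\pi(u)\,du - \lambda(t)\!\int\pi\log\pi\,du\Big\}=0,
\end{equation*}
with $V(T,w)=U_p(w)$. Provided $V_{ww}<0$, a pointwise variational argument (or completing the square in the exponent of the first-order condition $\pi\propto\exp((\cdots)/\lambda(t))$) identifies the inner maximizer as $\mathcal N(m^*,\tau^*)$ with $m^*=-(\mu-r)V_w/(\sigma^2 wV_{ww})$ and $\tau^*=-\lambda(t)/(\sigma^2 w^2 V_{ww})$. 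Substituting this back produces the simplified HJB.

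For $p=0$ I would try the ansatz $V(t,w)=\log w+g(t)$. The scaling invariants $V_w^2/V_{ww}\equiv-1$ and $w^2V_{ww}\equiv-1$ cause every $w$-dependence to cancel, and the HJB collapses to the ODE
\begin{equation*}
g'(t)+r+\frac{(\mu-r)^2}{2\sigma^2}+\frac{\lambda(t)}{2}\log\frac{2\pi\lambda(t)}{\sigma^2}=0,\qquad g(T)=0.
\end{equation*}
Integration yields the stated $V$, and substituting $V_w=1/w$, $V_{ww}=-1/w^2$ into $(m^*,\tau^*)$ reproduces $\hat\pi\sim\mathcal N((\mu-r)/\sigma^2,\lambda(t)/\sigma^2)$, matching (\ref{pi_0}). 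I would close this case by a standard verification argument: apply It\^o to $V(s,W_s^\pi)$, use the HJB inequality to conclude $V\geq J(\cdot;\pi)$ for every admissible $\pi$ together with equality at $\hat\pi$, and check admissibility of $\hat\pi$, which is immediate since its moments are deterministic and continuous on $[t,T]$.

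For ill-posedness when $0<p<1$ I would exhibit the exploding family $\pi^n_s\equiv\mathcal N(0,n)$. The wealth SDE reduces to $dW^{\pi^n}_s=rW^{\pi^n}_s\,ds+\sigma\sqrt{n}\,W^{\pi^n}_s\,dB_s$, whose terminal value is lognormal and gives
\begin{equation*}
\mathbb E\bigl[(W_T^{\pi^n})^p\bigr]=w^p\exp\Big\{pr(T-t)+\tfrac{p(p-1)}{2}\sigma^2 n(T-t)\Big\}.
\end{equation*}
Since $p(p-1)<0$, $\mathbb E[U_p(W_T^{\pi^n})]\to 0^+$ as $n\to\infty$, whereas the entropy integral equals $\int_t^T\frac{\lambda(s)}{2}\log(2\pi e n)\,ds\to+\infty$ by positivity and continuity of $\lambda$ on the compact $[t,T]$. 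Hence $J(t,w;\pi^n)\to+\infty$, which rules out well-posedness.

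For well-posedness at $p<0$ I would invoke a reduction-and-bound argument. Because the wealth dynamics depend on $\pi_s$ only through its first two moments $(m_s,\tau_s)$, replacing $\pi_s$ by a Gaussian with the same mean and second moment leaves $\mathbb E U_p(W_T^\pi)$ unchanged while only increasing the Shannon entropy; hence it suffices to bound $J$ on Gaussian strategies. For these, It\^o on $W^p/p$ yields $\mathbb E U_p(W_T^\pi)=\frac{w^p}{p}\exp\{\int_t^T p[r+(\mu-r)m_s+\frac{p-1}{2}\sigma^2\tau_s]\,ds\}$, and since $p<0$ gives both $w^p/p<0$ and $p(p-1)/2>0$, the utility term decays super-exponentially in $\int\tau_s\,ds$, dominating the merely logarithmic entropy gain $\int\lambda(s)\cdot\tfrac{1}{2}\log(2\pi e\,\tau_s)\,ds$. \textbf{The main obstacle} will be making this comparison uniform over admissible strategies with stochastic and possibly unbounded $\tau_s$; I expect a localization argument together with a supermartingale/Gronwall-type estimate for $\mathbb E[W_s^p]$ will suffice to close the bound.
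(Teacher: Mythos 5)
Your treatment of the ill-posedness for $0<p<1$ matches the paper's (same divergent family $\mathcal N(0,n)$, same computation showing the utility term vanishes while the entropy term grows like $\log n$), and your $p=0$ derivation via the ansatz $V=\log w+g(t)$ is correct and actually more self-contained than the paper, which simply cites Jiang et al.\ for that case. The genuine gap is in your argument for well-posedness when $p<0$. Two problems: first, the formula $\mathbb E\,U_p(W_T^\pi)=\frac{w^p}{p}\exp\{\int_t^T p[r+(\mu-r)m_s+\frac{p-1}{2}\sigma^2\tau_s]\,ds\}$ is only valid when the moment processes $(m_s,\tau_s)$ are deterministic; for general admissible $\pi$ they are $\mathcal F_s$-adapted random processes and the expectation does not factor this way. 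Second, and more fundamentally, the comparison you need is \emph{uniform over all admissible strategies}: you cannot simply drop the (negative) utility term, because the entropy term alone is unbounded over the admissible class, so the super-exponential decay must be played off against the logarithmic gain pathwise or in expectation for arbitrary stochastic $\tau_s$. You correctly flag this as the main obstacle, but the proposed localization/Gronwall fix is not carried out, and it is exactly where the difficulty lies.

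The paper sidesteps all of this with a one-line domination: for $p<0$ and $x>0$ one has $x^p/p\le\log x$, hence $U_p\le U_0$ pointwise and $J^p(t,w;\pi)\le J^0(t,w;\pi)\le V^0(t,w)<\infty$ for every admissible $\pi$, where finiteness of $V^0$ is the already-established $p=0$ result; combined with the existence of one strategy giving $J^p>-\infty$, this yields well-posedness immediately. I would recommend replacing your $p<0$ argument with this comparison (or, if you want to keep your route, you must first restrict to Gaussian strategies as you suggest, then handle stochastic moment processes by conditioning on the path of $(m,\tau)$ and proving a bound of the form $\sup_{x>0}\{\frac{w^p}{p}e^{\kappa x}+C\log(1+x)\}<\infty$ with $\kappa>0$, applied $\omega$-wise, which is considerably more work for the same conclusion).
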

\begin{proof}
	We first show that the ill-posedess when $0<p<1$. Consider the  strategy
	\begin{equation}\label{eq_strategy}
		\pi_s^n\sim \mathcal{N}\Big(0,n^2\Big), \ s \in [t, T].
	\end{equation}
	It is easy to check that (\ref{eq_strategy}) is an admissible control. Through some calculations, we obtain the value of reward function under the above strategy:
	\begin{equation*}
		\begin{aligned}
			J(t,w;\pi^n)
			&=\frac{w^p}{p} \exp\Big\{pr( T-t)+\Big(\frac{p^2-p}{2}\Big)\sigma^2n^2(T-t)\Big\}+ \frac{1}{2}\log(2\pi e  n^2)\int_{t}^{T}\lambda(s)ds.\end{aligned}
	\end{equation*}
	It follows that when $0<p<1$,
	\begin{equation*}
		\lim\limits_{n \to +\infty}J(t,w;\pi^n)=+\infty,
	\end{equation*}
	which implies the ill-posedness. \\
	\indent It remains to show that it is well-posed when $p<0$. Note that $x^p/p \leq \log x$ for any $x>0$ when $p<0$. Then the correspondingly reward function $J^p \leq  J^0$ for $p<0$ where the subscript $p$ emphasizes the difference in the utility function of $J$. Besides, it is easy to check that there exists an admissible control $\pi$ such that $J^p(t,w;\pi)>-\infty$. On the other hand, the problem when $p=0$ is well-posed, and the optimal control is given in (\ref{pi_0}) (See Jiang et al.\cite{jiang2022reinforcement}). Taking the supremum on both sides of $J^p\leq  J^0$ with respect to admissible controls, we complete the proof.
\end{proof}
	In the proof of Proposition \ref{pro_well_time}, we have some interesting observations. First, when $0<p<1$, the ill-posedness of the control problem arises from the temperature parameter function $\lambda$ depending only on time instead of current wealth. Regardless of the level of wealth, we assign equal weight to exploration. This leads to obtaining rewards through excessive exploration rather than effective exploitation. Second,    when $p<0$, although the problem is well-posed, explicit analytical forms are not available for both the optimal value function and the optimal policy. This is because the corresponding HJB equation (will be derived in detail later)
	\begin{equation*}
		v_t +rwv_w-\frac{1}{2}\frac{(\mu-r)^2v_w^2}{\sigma^2v_{ww}}+\frac{\lambda(t)}{2}\log\Big(-\frac{2\pi \lambda(t)}{\sigma^2w^2v_{ww}}\Big)=0	
	\end{equation*}
	with terminal condition $v(T,x)=x^p/p$ is a fully non-linear equation that cannot be solved by dimensionality reduction due to non-homogeneity. Instead, the existence of PDE may be handled using the method of weak solutions,  which is beyond the scope of this paper. Furthermore, even if a solution exists, the weak nature of the solution makes it difficult to parameterize these functions in simpler forms and perform numerical experiments.\\
	\indent The above findings inspire us to consider the issues of over-exploration and homogeneity when selecting the primary temperature function. Therefore, we infer that the candidate primary temperature function $\lambda(\cdot,\cdot)$ is related to wealth and consider the case that $\lambda(t,w)=\gamma w^p$ where $\gamma>0$ is a constant. 
		\begin{remark}
		$\lambda(t,w)=\gamma w^p$ implies the exploration is wealth-dependent. Besides, the following will show that dimensionality reduction can be applied to the HJB equation so that a (semi-)closed-form solution becomes possible. In addition, when $0<p<1$, it has economic significance, meaning that investors with more wealth will place greater emphasis on exploration. This aligns with real practice, where wealthier individuals tend to engage in higher-risk investments.
	\end{remark}
	\section{Optimal Strategy}\label{sec:optimal_strategy}
	In this section, We will solve for the optimal strategy using dynamic programming in the case of $\lambda(t,w)=\gamma w^p$ and provide several specific examples where (semi-)closed-form solutions exist. For the purpose of comparison, we first present the result of the classical benchmark. 
	\begin{proposition}
		When $\gamma \equiv 0 $, the optimal strategy of control problem $(\ref{pb_ex})$ is 
		\begin{equation}\label{cl_strategy}
			\hat{u}(t,w)=\begin{dcases}
				\frac{\mu -r}{\sigma^2(1-p)}, &\text{if }~p<1, p\neq0,\\
				\frac{\mu-r}{\sigma^2}, &\text{if }~ p=0.
			\end{dcases}
		\end{equation}
		Correspondingly, the optimal value function denoted by $V^{\text{cl}}$  is 
		\begin{equation*}
			V^{\text{cl}}(t,w)=\begin{dcases}
				\exp\Big\{\Big[\frac{(\mu-r)^2 p}{2\sigma^2(1-p)}+rp\Big](T-t)\Big\}\frac{w^p}{p},&\text{if }~p<1, p\neq0,\\
				\Big[\frac{(\mu-r)^2}{2\sigma^2}+r\Big](T-t)+\log w, &\text{if }~p=0.
			\end{dcases}
		\end{equation*}
	\end{proposition}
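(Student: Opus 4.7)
The plan is to derive the result via a dynamic programming argument. Since $\gamma\equiv 0$, the Tsallis entropy term drops out of the reward and we are left with $J(t,w;\pi)=\mathbb{E}[U_p(W_T^\pi)\mid W_t^\pi=w]$. Inspecting (\ref{eq:ep_wealth}), the exploratory wealth depends on $\pi_s$ only through its first two moments $m_s:=\int_{\mathbb{R}} u\,\pi_s(u)\,du$ and $v_s:=\int_{\mathbb{R}} u^2\,\pi_s(u)\,du$, subject to the constraint $v_s\geq m_s^2$. Consequently the associated HJB equation reads
$$V_t+rwV_w+\sup_{m\in\mathbb{R},\,v\geq m^2}\bigl\{(\mu-r)m w V_w+\tfrac{1}{2}\sigma^2 v w^2 V_{ww}\bigr\}=0,$$
with terminal condition $V(T,w)=U_p(w)$.

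Next I would impose the CRRA ansatz $V(t,w)=f(t)w^p/p$ for $p\neq 0$ and $V(t,w)=h(t)+\log w$ for $p=0$. Under either ansatz one checks $V_{ww}<0$ whenever $p<1$, so inside the HJB supremum the constraint $v\geq m^2$ is binding, i.e.\ $\hat{v}=\hat{m}^2$; the remaining quadratic in $m$ is maximised at $\hat{m}(t,w)=-(\mu-r)V_w/(\sigma^2 w V_{ww})$, which, evaluated on the ansatz, is exactly the candidate strategy (\ref{cl_strategy}). Substituting this maximiser back converts the HJB into a linear first-order ODE for $f(\cdot)$ (or $h(\cdot)$) with terminal data $f(T)=1$ (respectively $h(T)=0$), and a direct integration reproduces the stated closed form for $V^{\text{cl}}$.

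The delicate point is that the optimiser $\hat{v}=\hat{m}^2$ corresponds to a Dirac measure concentrated at $\hat{u}(t,w)$, which is not absolutely continuous and hence not an admissible element of $\Pi(t,w)$. I would close the gap by a standard two-sided verification argument. For the upper bound, applying It\^o's formula to $V^{\text{cl}}(s,W_s^\pi)$ along any admissible $\pi$ and using that the HJB operator evaluated at any feasible pair $(m_s,v_s)$ with $v_s\geq m_s^2$ is non-positive yields $J(t,w;\pi)\leq V^{\text{cl}}(t,w)$. For the matching lower bound, I would approximate $\delta_{\hat{u}}$ by the Gaussian family $\pi_s^\varepsilon\sim\mathcal{N}\bigl(\hat{u}(s,W_s^{\pi^\varepsilon}),\varepsilon^2\bigr)$, which is admissible, and invoke standard SDE stability together with dominated convergence to pass $J(t,w;\pi^\varepsilon)\to V^{\text{cl}}(t,w)$ as $\varepsilon\downarrow 0$. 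The main obstacle is not the PDE calculation but this attainability argument, in particular the uniform integrability needed in the limit, which must be handled slightly differently for $p<0$, $0<p<1$, and $p=0$ because of the distinct growth/integrability properties of $U_p$.
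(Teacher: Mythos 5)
The paper offers no proof of this proposition: it is stated as the classical Merton benchmark, with the remark that \eqref{cl_strategy} ``also gives the solution to Merton's problem,'' so there is nothing in the text to compare against line by line. Your derivation is the standard dynamic-programming route and its main lines are correct: with $\gamma\equiv 0$ the Hamiltonian depends on $\pi$ only through $(m,v)$ with $v\geq m^2$, the coefficient of $v$ is $\tfrac12\sigma^2w^2V_{ww}<0$ under the CRRA ansatz, so the constraint binds and the relaxed problem collapses to the classical one; the quadratic in $m$ then yields the Merton ratio and the linear ODE for $f$ (resp.\ $h$) integrates to the stated $V^{\text{cl}}$. You also correctly flag the point the paper glosses over: the maximiser is a Dirac mass, which is not in $\mathcal{P}(\mathbb{R})$, so within $\Pi(t,w)$ the supremum is approached but not attained, and your two-sided argument (It\^o upper bound plus Gaussian approximation $\mathcal{N}(\hat u,\varepsilon^2)$ for the lower bound) is the right way to make the proposition rigorous; note that for the lower bound no abstract stability argument is needed, since under $\pi^\varepsilon$ the wealth is an explicit geometric Brownian motion and $J(t,w;\pi^\varepsilon)=\tfrac{w^p}{p}\exp\{p[r+(\mu-r)\hat u](T-t)+\tfrac{p(p-1)}{2}\sigma^2(\hat u^2+\varepsilon^2)(T-t)\}\to V^{\text{cl}}(t,w)$ by direct computation. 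The one genuine loose end is the upper bound for $p<0$: there $V^{\text{cl}}(s,W_s^\pi)$ is a \emph{nonpositive} local supermartingale, and Fatou along a localizing sequence gives the inequality in the wrong direction, so passing from $\mathbb{E}[M_{T\wedge\tau_n}]\leq M_t$ to $\mathbb{E}[M_T]\leq M_t$ requires uniform integrability (or a duality argument) rather than the routine limit you invoke for $0<p<1$. You acknowledge this, but it is the only step of the proof that is not actually carried out and would need to be supplied, e.g.\ by exploiting admissibility condition 4.
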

	Apparently, (\ref{cl_strategy}) also gives the solution to Merton's problem (\ref{pb:classical}). When $\gamma>0$, by the principle of dynamic programming, we derive the exploratory HJB equation satisfied by the optimal value function:
	
		\begin{equation}\label{tsallis_hjb}
		\begin{aligned}0=v_{t}+rw v_{w}+\sup\limits_{\pi}\Big\{\int_{\mathbb{R}}&\Big[(u-r)wv_{w}u+\frac{1}{2}\sigma^{2}w^{2}v_{ww}u^{2}\Big]\pi(u)du\\&+\gamma w^p\int_{\mathbb{R}}H_\beta(\pi(u))du\Big\},\end{aligned}
	\end{equation}
	with terminal condition $v(T,w)=w^p/p$.\\
 	\indent	Let $Y(u;\pi)=\Big((\mu -r)uwv_w+\frac{1}{2}\sigma^2u^2w^2v_{ww}\Big)\pi(u)-\gamma w^pH_\beta(\pi(u))$. Consider the following optimization problem:
 \begin{equation*}
 	\begin{cases}
 		\max\limits_{{\pi}\in \mathcal{P}(\mathbb R)}\int_{\mathbb R} Y(u;\pi)du,\\
 		s.t.\ \int_{\mathbb R}\pi(u)du=1~ \text{and}~ \pi(u)\geq 0.
 	\end{cases}
 \end{equation*}
 We introduce Lagrange multipliers $\varphi(t,w)$ and $\xi(t,w,u)$, and consider the following optimization problem:
 \begin{equation*}
 	\max_{{\pi}\in \mathcal{P}(\mathbb R)} L(u,\pi),
 \end{equation*}
 where $L$ is defined as 
 $$L(u,\pi) = \int_{\mathbb{R}}Y(u,\pi)+\varphi(t,w)\Big(\int_{\mathbb{R}}\pi(u)du-1\Big)+\int_{\mathbb{R}}\xi(t,w,u)\pi(u)du,$$
and
\begin{eqnarray}{}
& \int_{\mathbb R}\pi(u)du=1 \label{-psi},\\
 & \xi(t,w,u)\pi(u)=0~ \text{and}~ \pi(u)\geq 0. \label{-xi}
\end{eqnarray}
According to the first variation principle, we can derive the corresponding Euler-Lagrange equation. Simplifying it, we obtain
 \begin{equation}\label{ex_xi}
 		\hat{\pi}(u|t,w)=
 		\begin{dcases}
 			\Big(\frac{\beta-1}{\beta \gamma w^p }\Big)^{\frac{1}{\beta-1}}\Big(\varphi(t,w)+ \xi(t,w,u)+\mathcal{A}v+\frac{\gamma w^p}{\beta-1}\Big)^{\frac{1}{\beta-1}},\quad \beta>1,\\
 			\exp\Big\{\varphi(t,w)+ \xi(t,w,u)+\mathcal{A}v-\gamma w^p\Big\},\quad \beta=1,
 		\end{dcases}
 \end{equation}
 where $\mathcal{A}v=(\mu -r)uwv_w+\frac{1}{2}\sigma^2u^2w^2v_{ww}$. By (\ref{-xi}), we have
 \begin{equation}\label{xi}
 	\xi(t,w,u)= 
 	\begin{dcases}
 		\Big(\varphi(t,w)+\mathcal{A}v+\frac{\gamma w^p}{\beta-1}\Big)_-, \quad \beta>1,\\
 		0,\quad \beta=1,	
 	\end{dcases}
 \end{equation}
 where $(x)_-=-\min\{x,0\}$. Substituting (\ref{xi}) into (\ref{ex_xi}) yields
 
 \begin{equation}\label{pi_tsalli}
 	\hat{\pi}(u|t,w)=
 	\begin{dcases}
 		\Big(\frac{\beta-1}{\beta \gamma w^p }\Big)^{\frac{1}{\beta-1}}\Big(\varphi(t,w)+ \mathcal{A}v+\frac{\gamma w^p}{\beta-1}\Big)_+^{\frac{1}{\beta-1}},\quad \beta>1,\\
 		\exp\Big\{\varphi(t,w)+\mathcal{A}v-\gamma w^p\Big\},\quad \beta=1.
 	\end{dcases}
 \end{equation}
 where $(x)_+=\max\{x,0\}$. $\varphi$ is chosen to satisfy normalization condition (\ref{-psi}), that is, 
 \begin{equation}\label{varphi}
 	\int_{\mathbb{R}}\hat{\pi}(u|t, w)du=1.
 \end{equation}
For a general $\beta$, it is difficult to obtain an (semi-)explicit expression for $\varphi$, let alone for optimal strategy $\hat{\pi}$ and optimal value function $V$. Next, we will study two specific examples.
\subsection{The Case where $\beta=1$}	
When $\beta=1$, Tsallis entropy degenerates to the well-known Shannon entropy. According to (\ref{varphi}), we can compute
\begin{equation}\label{varphi_1}
	\varphi(t,w)=\log\frac{\exp(\gamma w^p)}{\int_{\mathbb{R}}\exp(\mathcal{A}v)du}.
\end{equation}
Substituting (\ref{varphi_1}) into (\ref{pi_tsalli}), we get
	 \begin{equation}\label{eq:optimal_p}
	 	\begin{split}
	 		\hat{\pi}(u|t,w)\sim \mathcal{N}\Big(-\frac{(\mu-r)v_w}{\sigma^2wv_{ww}},-\frac{\gamma w^{p-2}}{\sigma^2v_{ww}}\Big),
	 	\end{split}
	 \end{equation}
	where we assume $v_{xx}<0$ which will be verified later. The corresponding HJB equation is reduced to 
	\begin{equation}\label{hjb_beta1}
		\begin{dcases}
			v_t+rwv_w-\frac{1}{2}\frac{(\mu-r)^2v_w^2}{\sigma^2v_{ww}}+\frac{\gamma w^p}{2}\log\Big(-\frac{2\pi \gamma w^{p-2}}{\sigma^2v_{ww}}\Big)=0,\\
			v(T,w)=\frac{w^p}{p}.
		\end{dcases}
	\end{equation}
	Noting that this equation exhibits homogeneity, we consider dimensional reduction. We make an ansatz that $v(t,w)=f(t)w^p/p$. Plugging it into (\ref{hjb_beta1}) yields 
	\begin{equation}\label{eq:ODE_f}
		\begin{dcases}
			f'(t)=\Big[-\frac{p (\mu-r)^2}{2(1-p)\sigma^2}-rp\Big]f(t)+\frac{p \gamma}{2} \log(f(t))+\frac{p \gamma}{2}\log\Big(\frac{\sigma^2(1-p)}{2 \pi \gamma}\Big),\\ 
			f(T)=1.
		\end{dcases}
	\end{equation}
	Therefore, we only need to discuss the existence and uniqueness of the ODE (\ref{eq:ODE_f}). Let $y(t)=f(T-t)$, $h(y) = ay+b\log(y)+c$ where
	$$a=rp+\frac{p (\mu-r)^2}{2(1-p)\sigma^2}, ~~b=-\frac{p \gamma}{2}, ~~c=-\frac{p \gamma}{2}\log\Big(\frac{\sigma^2(1-p)}{2 \pi \gamma}\Big).$$ Then (\ref{eq:ODE_f}) is equivalent to 
\begin{equation}\label{eq:ODE2}
	\begin{dcases}
		y'=h(y),\\ 
		y(0)=1.
	\end{dcases}
\end{equation}
	Note that since $(r,\mu,\sigma,\gamma)\in \mathbb{R}\times\mathbb{R}\times \mathbb{R}_+\times \mathbb{R}_+$ and $p<1,p\neq 0$, then $(a,b,c)$ takes value in $ \mathbb{R}\times \mathbb{R}/{\{0\}}\times \mathbb{R}. $
	We first study the properties of $h$. 
	\begin{lemma}\label{lemma}
		 We have the following results for function $h$:
		\begin{enumerate}
			\item Consider the case where $p<0$ which implies $b>0$.\\
			$(i)$ If $a\geq 0$, then $h(y)$ is a strictly increasing function in $(0,+\infty)$, with values ranging from $-\infty$ to $\infty$.\\
			$(ii)$ If $a<0$ and $b[\log b-\log(-a)-1]+c>0$, then $h(y)$ initially increases and then decreases  in $(0,+\infty)$, with two distinct zeros.\\
		$	(iii)$ If $a<0$ and $b[\log b-\log(-a)-1]+c\leq 0$, then $h(y)$ initially increases and then decreases  in $(0,+\infty)$ with $h(y)\leq 0$. Moreover, if $b[\log b-\log(-a)-1]+c= 0$, there exists a unique zero.
			\item Consider the case where $0<p<1$, which implies $b<0$.\\
            $	(i)$ If $a\leq 0$, then $h(y)$ is a strictly decreasing function in $(0,+\infty)$, with values ranging from $-\infty$ to $\infty$.\\
            $(ii)$ For $a>0$, if $b[\log(- b)-\log(a)-1]+c<0$, then $h(y)$ initially decreases and then increases in $(0,+\infty)$, with two distinct zeros\\
			$ (iii)$For $a>0$, if $b[\log(- b)-\log(a)-1]+c\geq 0$, then $h(y)$ exhibits the same monotonicity as $(ii)$ but $h(y)\geq 0$. Moreover, if $b[\log(- b)-\log(a)-1]+c = 0$, there exists a unique zero.
		\end{enumerate}
	\end{lemma}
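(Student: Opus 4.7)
The plan is to treat the lemma as a standard qualitative study of the function $h(y)=ay+b\log y+c$ on $(0,+\infty)$ using one-variable calculus. I would first compute $h'(y)=a+b/y$ and $h''(y)=-b/y^{2}$, note that any interior critical point must satisfy $y^{\ast}=-b/a$ (which lies in $(0,+\infty)$ exactly when $a$ and $b$ have opposite signs), and record the boundary behaviour: as $y\to 0^{+}$, $h(y)\sim b\log y$, while as $y\to +\infty$, $h(y)$ is dominated by $ay$ if $a\neq 0$ and by $b\log y$ otherwise. Once these ingredients are in hand, each sub-case is a direct combination of monotonicity from the sign of $h'$, the two end-behaviour limits, and an application of the intermediate value theorem to count zeros.

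For Case 1, where $p<0$ and hence $b>0$: under (i) $a\geq 0$ we have $h'(y)>0$ on the whole half-line, so $h$ is strictly increasing from $-\infty$ (at $0^{+}$) to $+\infty$, which yields the claim. Under $a<0$, the unique critical point $y^{\ast}=-b/a>0$ is a maximum since $h''<0$, both limits $h(0^{+})$ and $h(+\infty)$ equal $-\infty$, and substitution gives the closed form $h(y^{\ast})=b[\log b-\log(-a)-1]+c$. The sign of this maximum then separates parts (ii) (strictly positive, two distinct zeros by IVT on the two monotone branches) and (iii) (non-positive, so $h\le 0$, with a unique tangential zero at $y^{\ast}$ when equality holds).

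For Case 2, where $0<p<1$ and hence $b<0$: the analysis is mirror-symmetric. Under (i) $a\leq 0$ we have $h'(y)=a+b/y<0$, so $h$ is strictly decreasing; the limits become $h(0^{+})=+\infty$ and $h(+\infty)=-\infty$, giving the stated surjectivity. Under $a>0$, the critical point $y^{\ast}=-b/a>0$ is now a minimum (since $-b/y^{2}>0$), both boundary limits equal $+\infty$, and a direct computation yields $h(y^{\ast})=b[\log(-b)-\log a-1]+c$. The sign of this minimum separates parts (ii) and (iii) by the same IVT argument as before, with equality producing a single tangential zero.

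I do not expect a real obstacle in this lemma: it is essentially bookkeeping. The only place where one must be careful is the evaluation of $h$ at $y^{\ast}=-b/a$, where the signs of $a$ and $b$ differ between the two cases; writing $\log(-b/a)$ as $\log b-\log(-a)$ in Case 1 and as $\log(-b)-\log a$ in Case 2 is what produces the two distinct-looking thresholds $b[\log b-\log(-a)-1]+c$ and $b[\log(-b)-\log a-1]+c$ stated in the lemma.
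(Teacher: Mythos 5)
Your proof is correct and complete: the computation of $h'(y)=a+b/y$, the identification of the unique critical point $y^{\ast}=-b/a$ when $a$ and $b$ have opposite signs, the evaluation $h(y^{\ast})=b[\log(-b/a)-1]+c$ (which correctly produces the two thresholds after rewriting $\log(-b/a)$ according to the signs of $a$ and $b$), and the boundary limits combined with the intermediate value theorem settle every sub-case. The paper omits the proof entirely as ``easy,'' and your argument is precisely the standard one-variable analysis the authors evidently have in mind.
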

	\begin{proof}
		The proof is easy. We omit it.
	\end{proof}
	 Combining the Lemma \ref{lemma} and the theory of ODE, we have:
	\begin{proposition}\label{them}~
		\begin{enumerate}
			\item When $0<p<1$, the ODE $(\ref{eq:ODE2})$ admits a unique \textbf{positive} solution defined in $[0,+\infty)$. Specifically, the solution falls into one of the  four cases:\\
			$ (i)$ $y(\cdot)$ is increasing  to $+\infty$; \\
			 $(ii)$ $y(\cdot)$ is increasing and converges to a constant greater than 0;\\
			 $(iii)$ $y(\cdot)$ is decreasing and converges to a constant greater than 0;\\
			 $ (iv)$ $y(\cdot)$ is identically equal to a constant.  
			\item When $p<0$, the ODE $(\ref{eq:ODE2})$ admits a unique solution defined in $[0,\delta)$. $\delta$ is finite or positive infinity. When $0<\delta<+\infty$, $y(\cdot)$ decreases to $0$ at $\delta$. When $\delta=+\infty$, h either increases or decreases and in both cases converges to a constant greater than 0, or $h$ is identically constant. 
		\end{enumerate}
	\end{proposition}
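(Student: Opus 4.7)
The plan is to treat (\ref{eq:ODE2}) as an autonomous scalar ODE on the open state space $(0,+\infty)$, where $h(y)=ay+b\log y+c$ is $C^{\infty}$ and hence locally Lipschitz. The Picard-Lindel\"of theorem then yields a unique maximal solution $y:[0,\delta)\to(0,+\infty)$ with $\delta\in(0,+\infty]$, and uniqueness forces the sign of $h(y(t))$ to be constant on $[0,\delta)$: the trajectory cannot cross a zero of $h$, so $y$ is strictly monotone unless $h(1)=0$, in which case $y\equiv 1$ is the unique solution. This immediately handles the identically-constant alternatives, namely case (iv) of part 1 and the last clause of part 2.

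I would then combine this monotonicity with the boundary behavior of $h$ to determine whether $\delta$ is finite. For $0<p<1$ (so $b<0$), Lemma \ref{lemma} together with $\lim_{y\to 0^{+}}h(y)=+\infty$ shows that a decreasing trajectory is bounded below by the largest equilibrium of $h$ in $(0,1)$; since $h$ is at most linear at $+\infty$, the integral $\int^{+\infty}dy/h(y)$ diverges and an increasing trajectory cannot escape to $+\infty$ in finite time. Hence $\delta=+\infty$, and comparing $y(0)=1$ with the at-most-two zeros of $h$ catalogued in parts 2(i)-(iii) of Lemma \ref{lemma} produces the four alternatives (i)-(iv) of part 1. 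For $p<0$ (so $b>0$) the sign at $0^{+}$ flips: $\lim_{y\to 0^{+}}h(y)=-\infty$. Going through the three subcases 1(i)-(iii) of Lemma \ref{lemma}, I would compare $y(0)=1$ with the equilibria and read off whether the monotone trajectory is pinned at an equilibrium, converges to one as $t\to+\infty$, or, when no equilibrium lies between $0$ and the trajectory, is driven down to $0$ in finite time.

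The main obstacle will be the case-by-case bookkeeping matching each Lemma \ref{lemma} configuration with the position of $y(0)=1$ relative to the equilibria, and in particular, when $p<0$, proving that a strictly decreasing solution with no equilibrium in $(0,1)$ hits $0$ in finite rather than infinite time. By separation of variables this reduces to showing the first-passage integral $\int_{0}^{1}dy/(-h(y))$ is finite. Since $h(y)\sim b\log y\to-\infty$ as $y\to 0^{+}$, the integrand is bounded above by $1/(|b||\log y|)$ near $0$, and the substitution $u=-\log y$ reduces $\int_{0}^{1/2}dy/|\log y|$ to $\int_{\log 2}^{+\infty}e^{-u}/u\,du<+\infty$, which gives $\delta<+\infty$ with $y(\delta^{-})=0$. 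All the remaining claims (strict monotonicity, convergence to a positive constant equal to an equilibrium of $h$, no finite-time blow-up) then follow from standard one-dimensional autonomous ODE theory once the sign pattern of $h$ supplied by Lemma \ref{lemma} is in hand.
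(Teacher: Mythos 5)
Your argument is essentially the paper's: a Picard--Lindel\"of existence/uniqueness step on the state space $(0,+\infty)$, monotonicity from the fact that the trajectory cannot cross an equilibrium of $h$, the case-by-case matching of $y(0)=1$ against the zeros of $h$ catalogued in Lemma \ref{lemma}, and a linear-growth comparison at $+\infty$ to exclude finite-time blow-up. The paper only writes out the case $0<p<1$ and explicitly leaves $p<0$ to the reader, so your first-passage computation $\int_{0}^{1}dy/(-h(y))<+\infty$ via $\int_{0}^{1/2}dy/|\log y|<+\infty$ is a correct and welcome completion of the finite-time-extinction claim that the paper omits.
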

	\begin{proof}
		The proof is provided in Appendix \ref{pf_them}.
	\end{proof}
    \begin{figure}[htbp]
		\centering
		\includegraphics[width=1\linewidth]{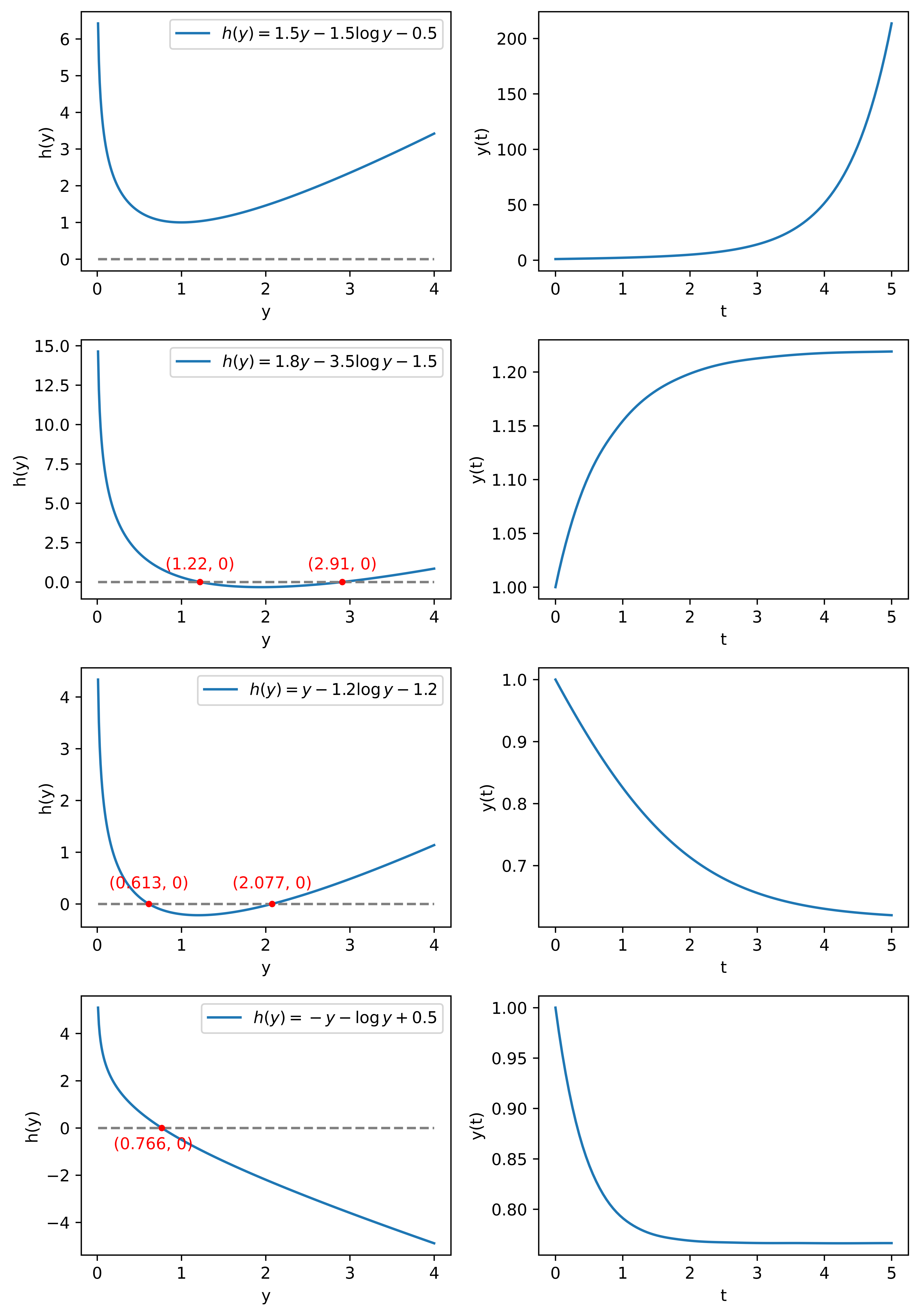}
		\caption{Numerical Solutions of the ODE under Different Parameter Selections when $b<0$}\label{fig:ode1}
	\end{figure} 
	\begin{figure}[htbp]
		\centering
		\includegraphics[width=1\linewidth]{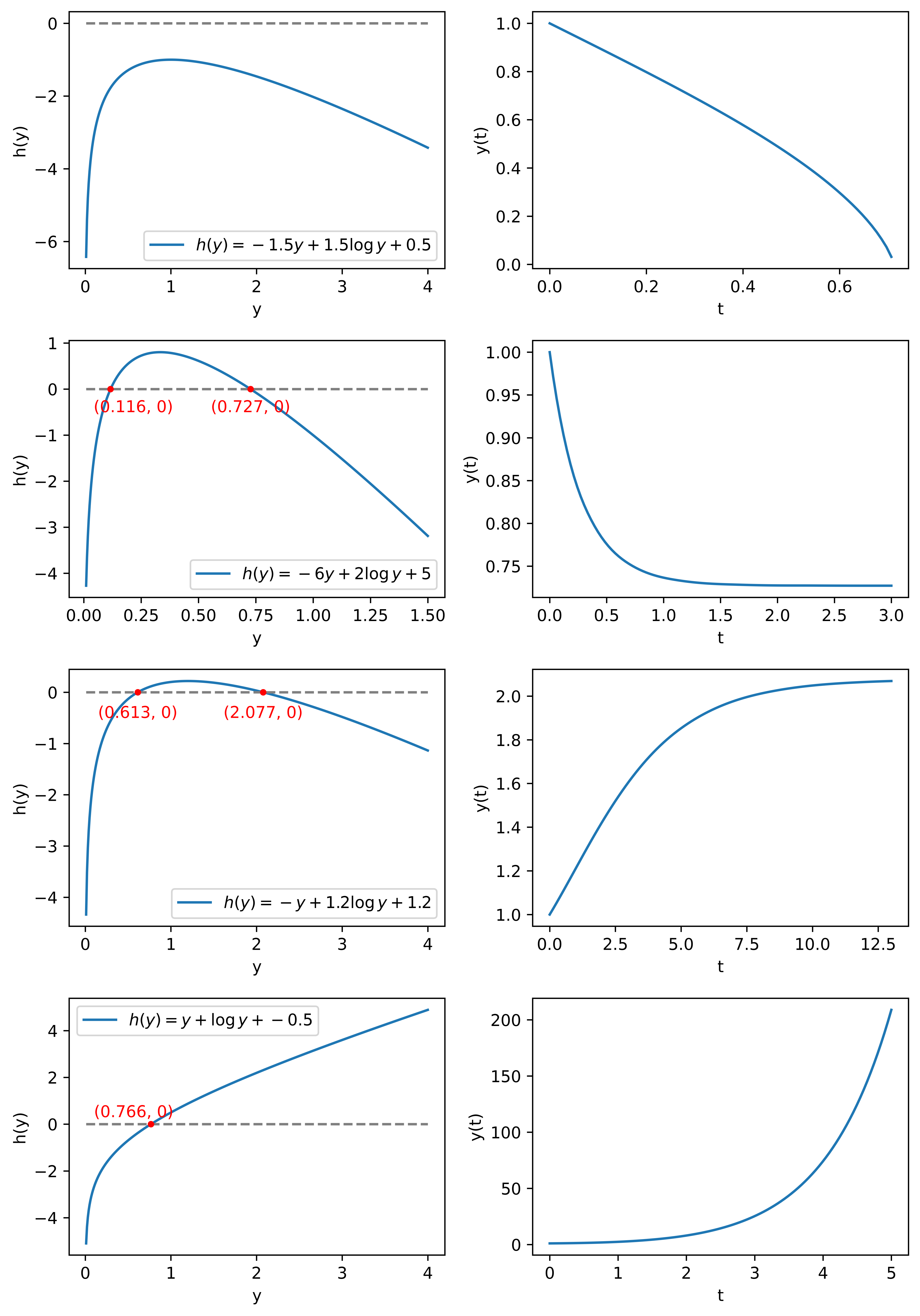}
		\caption{Numerical Solutions of the ODE under Different Parameter Selections when $b>0$}\label{fig:ode2}
	\end{figure} 
	For intuitive understanding of the reader, we employ the finite difference method (cf. Boyce et al. \cite{boyce2017elementary}) to numerically solve the ODE (\ref{eq:ODE2}) and present several representative numerical examples in Figure \ref{fig:ode1} and Figure \ref{fig:ode2}. The left column in the two Figures shows the graph of $h(\cdot)$, while the right column presents the solution of the corresponding ODE. 
    Next, we discuss the well-posedness based on Proposition \ref{them}.
\begin{proposition}\label{coro_wellpose}
We consider the exploratory problem $(\ref{pb_ex})$ with $\lambda(t,w)=\gamma w^p, \beta=1$, and $(r,\mu,\sigma,\gamma)$ taking values in $\mathbb{R}\times\mathbb{R}\times \mathbb{R}_+\times \mathbb{R}_+$. 
\begin{itemize}
\item[Case 1:] when $0<p<1$, we have
$v(t,w)<+\infty$, for $(t, w) \in [0, T] \times\mathbb{R}_+ $.
\item[Case 2:] when $p<0$ and the ODE $(\ref{eq:ODE2})$ admits a solution in $[0,\delta)$ where $\delta$ is defined in Proposition \ref{them}.
\begin{itemize}
\item[Case 2.1:] If $\delta>T$, we have $v(t,w)<+\infty$, for $(t, w) \in [0, T] \times\mathbb{R}_+ $.
\item[Case 2.2:] If $\delta<T$, we have $v(t,w)<+\infty$, for $(t, w) \in [T-\delta, T] \times\mathbb{R}_+ $.
\item[Case 2.3:] If $\delta<T$, we have $v(t,w)=+\infty$, for $(t, w) \in [0, T-\delta) \times\mathbb{R}_+ $.
\end{itemize}
\end{itemize}
\end{proposition}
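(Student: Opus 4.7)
The plan is to split the proof according to whether the problem is well-posed (Cases 1, 2.1, 2.2) or ill-posed (Case 2.3). The well-posed cases are handled uniformly by a verification theorem built from the ODE solution of Proposition~\ref{them}, while Case 2.3 is then deduced from the dynamic programming principle combined with the boundary information obtained from Case 2.2.

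For the well-posed cases, I would take $f(t)=y(T-t)$ on the interval on which $y$ remains positive (the whole $[0,T]$ in Cases 1 and 2.1, and $(T-\delta,T]$ in Case 2.2), and define the candidate $v(t,w)=f(t)w^p/p$. By the way the ODE~\eqref{eq:ODE_f} was derived from the reduced HJB \eqref{hjb_beta1}, $v$ is $C^{1,2}$ on its domain and solves \eqref{hjb_beta1} with terminal condition $v(T,w)=w^p/p$; the pointwise maximizer inside \eqref{tsallis_hjb} is exactly the Gaussian $\hat\pi$ of \eqref{eq:optimal_p}. A standard verification argument then closes these cases. For any admissible $\pi$, applying It\^o's formula to $v(s,W_s^\pi)$ on $[t,T]$, using the HJB inequality (the supremum in \eqref{tsallis_hjb} is $0$, so the integrand evaluated at any $\pi$ is non-positive) pointwise in $s$, localizing by $\tau_n=\inf\{s\ge t:|\log W_s^\pi|\ge n\}\wedge T$ to kill the local-martingale term, and passing to the limit via the admissibility integrability conditions yields $v(t,w)\ge J(t,w;\pi)$, with equality when $\pi=\hat\pi$. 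Hence $V=v<+\infty$ on the asserted region. In Case 2.2 we also get $V(T-\delta,w)=\lim_{t\downarrow T-\delta}f(t)w^p/p=0$ for every $w>0$, since $y(\delta^-)=0$ by Proposition~\ref{them}.

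For Case 2.3, I would invoke the DPP at the intermediate time $T-\delta$: for any $(t,w)\in[0,T-\delta)\times\mathbb{R}_+$ and any admissible $\pi$,
\begin{equation*}
V(t,w)\ge \mathbb{E}\Big[V(T-\delta,W_{T-\delta}^{\pi})+\int_t^{T-\delta}\gamma (W_s^{\pi})^p\int_{\mathbb{R}} H_1(\pi_s(u))\,du\,ds\Big].
\end{equation*}
The first term vanishes by the previous step. I would then specialize to the admissible family $\pi_s^n\sim\mathcal{N}(0,n^2)$, for which the constant-coefficient GBM gives $\mathbb{E}[(W_s^{\pi^n})^p]=w^p\exp\{(pr+\tfrac12 p(p-1)\sigma^2 n^2)(s-t)\}$ and $\int_{\mathbb{R}} H_1(\pi_s^n(u))\,du=\tfrac12\log(2\pi e\,n^2)$. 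Since $p<0$ forces $p(p-1)>0$, the rate in the exponent is positive and of order $n^2$, so a Fubini computation bounds the right-hand side below by a quantity of order $\log(n)\cdot e^{Cn^2(T-\delta-t)}/n^2$ for some $C>0$, which tends to $+\infty$. Hence $V(t,w)=+\infty$.

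The principal obstacle, in my view, is the rigorous justification of $V(T-\delta,\cdot)\equiv 0$ together with the use of DPP across that time. This requires continuity (or at least an appropriate upper bound) of $V$ from the right at $T-\delta$, which should be extracted from the verification argument in Case 2.2 by bounding $f(t)w^p/p$ uniformly in $t\in(T-\delta,T]$. A secondary technical issue is the verification in Case 2.2 itself near the endpoint: as $f(t)\downarrow 0$ the variance $\gamma/((1-p)\sigma^2 f(t))$ of $\hat\pi$ explodes, so admissibility of $\hat\pi$ and the localization must be controlled uniformly in $t$ near $T-\delta$, with careful bookkeeping for the integrability of $(W_s^{\hat\pi})^p$ and of the entropy integral.
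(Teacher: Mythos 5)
Your proposal is correct and follows essentially the same route as the paper: the well-posed cases are settled by the verification argument built on the positive ODE solution $f(t)=y(T-t)$, and Case 2.3 is handled via the dynamic programming principle at time $T-\delta$, where $V(T-\delta,\cdot)=0$ (since $y(\delta)=0$), followed by the blow-up of the entropy reward along the family $\pi^n_s\sim\mathcal{N}(0,n^2)$ using $\mathbb{E}[(W_s^{\pi^n})^p]=w^p\exp\{(pr+\tfrac12 p(p-1)\sigma^2n^2)(s-t)\}$ with $p(p-1)>0$. The technical points you flag (continuity of $V$ at $T-\delta$ and uniform control of the verification as $f\downarrow 0$) are indeed glossed over in the paper's proof, so your added care is a refinement rather than a divergence.
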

\begin{proof}
	The proof is given in Appendix \ref{proof_coro_wellpose}.
\end{proof}

Based on the discussions above, we summarize the verification theorem.
\begin{theorem}\label{ver_them}$(\textbf{Verification Theorem})$ We consider the exploratory problem $(\ref{pb_ex})$ with $\lambda(t,w)=\gamma w^p$ and $\beta=1$, where $(r,\mu,\sigma,\gamma)$ taking values in $\mathbb{R}\times\mathbb{R}\times \mathbb{R}_+\times \mathbb{R}_+$ such that either Case 1, 2.1 or 2.2 holds.
	Then the optimal policy of the exploratory problem $(\ref{pb_ex})$ is given by \begin{equation*}
			\begin{split}
				\hat{\pi}(u|t,w)\sim \mathcal{N}\Big(\frac{(\mu-r)}{\sigma^2(1-p)},\frac{\gamma }{(1-p)\sigma^2f(t)}\Big).
			\end{split}
		\end{equation*}
        where $f>0$ defined in corresponding time interval $(\tau, T]$, where $\tau=T-\delta\in [0, T)$, is a classical solution of ODE $(\ref{eq:ODE_f})$. 
	Furthermore, the optimal value function is $V(t,w)=f(t)x^p/p$.
\end{theorem}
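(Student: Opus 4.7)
The plan is a standard verification argument. First, I would set $v(t,w):=f(t)w^p/p$ on $(\tau,T]\times\mathbb{R}_+$, where $f$ is the positive classical solution of the ODE (\ref{eq:ODE_f}) on that interval (existence from Proposition \ref{them}, positivity in the well-posed cases from Proposition \ref{coro_wellpose}). Differentiation gives $v_w=f(t)w^{p-1}$ and $v_{ww}=-(1-p)f(t)w^{p-2}<0$, so the pointwise maximizer inside (\ref{tsallis_hjb}) is the Gaussian (\ref{eq:optimal_p}); a direct substitution shows
\begin{equation*}
-\frac{(\mu-r)v_w}{\sigma^{2}wv_{ww}}=\frac{\mu-r}{\sigma^{2}(1-p)},\qquad -\frac{\gamma w^{p-2}}{\sigma^{2}v_{ww}}=\frac{\gamma}{\sigma^{2}(1-p)f(t)},
\end{equation*}
which matches the announced mean and variance. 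Plugging this $v$ into (\ref{hjb_beta1}) reduces the HJB equation to exactly (\ref{eq:ODE_f}), confirming that $v$ is a classical $C^{1,2}$ solution of the HJB problem with terminal datum $v(T,w)=w^{p}/p$.

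Next, for any admissible $\pi\in\Pi(t,w)$ I would apply It\^o's formula to $v(s,W_s^{\pi})$ on $[t,T\wedge\theta_n]$ for a localizing sequence $\theta_n\uparrow T$ chosen so that the stochastic-integral term is a true martingale. Combined with the HJB inequality
\begin{equation*}
v_{t}+rwv_{w}+\int_{\mathbb{R}}\mathcal{A}v\,\pi(u)\,du+\gamma w^{p}\int_{\mathbb{R}}H_{1}(\pi(u))\,du\le 0,
\end{equation*}
this yields
\begin{equation*}
\mathbb{E}\!\left[v(T\wedge\theta_n,W_{T\wedge\theta_n}^{\pi})\right]+\mathbb{E}\!\left[\int_{t}^{T\wedge\theta_n}\gamma(W_{s}^{\pi})^{p}\!\int_{\mathbb{R}}H_{1}(\pi_{s}(u))\,du\,ds\right]\le v(t,w).
\end{equation*}
Passing $n\to\infty$ by dominated/monotone convergence, using the admissibility conditions (3)--(4) and the finiteness of $v$ guaranteed by Proposition \ref{coro_wellpose}, produces $J(t,w;\pi)\le v(t,w)$.

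Because the pointwise maximization in HJB is attained at $\hat{\pi}$, the inequality above becomes equality under $\hat{\pi}$, so the same computation yields $J(t,w;\hat{\pi})=v(t,w)$ once $\hat{\pi}$ is shown to be admissible. Admissibility reduces to showing that (\ref{eq:ep_wealth}) under $\hat{\pi}$ has a unique strong positive solution with the required moment bounds: since $\hat{\pi}_{s}\sim\mathcal{N}\bigl(\tfrac{\mu-r}{\sigma^{2}(1-p)},\tfrac{\gamma}{(1-p)\sigma^{2}f(s)}\bigr)$ has deterministic mean and variance, continuous and bounded away from $0$ and $\infty$ on any compact subinterval of $(\tau,T]$, the resulting wealth process is a geometric Brownian motion with deterministic, bounded coefficients whose log-normal moments are explicit, so conditions (3)--(4) are routine.

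The main obstacle will be (i) handling the boundary $t=\tau$ in Case 2.2, which requires localizing on $[\tau+\varepsilon,T]$, proving the identity there, and then letting $\varepsilon\downarrow 0$ while invoking continuity of $f$ on $(\tau,T]$ and the finiteness of $V$ from Proposition \ref{coro_wellpose}; and (ii) justifying the limit in the It\^o identity when $p<0$, where $U_{p}$ and $v$ are unbounded below near $w=0$. The latter is resolved by exploiting strict positivity of $W^{\pi}$ together with standard exponential moment estimates for $\log W^{\pi}$ obtained from admissibility condition (3), which provide the uniform integrability needed to remove the localization.
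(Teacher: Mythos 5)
Your proposal is correct and follows essentially the same route as the paper's proof: verify that $v(t,w)=f(t)w^p/p$ solves the exploratory HJB equation with the Gaussian maximizer, apply It\^o's formula up to a localizing stopping time, use the HJB inequality to bound $J(t,w;\pi)$ by $v(t,w)$ via Fatou/dominated convergence and the admissibility (integrability) conditions, and observe that equality holds for $\hat{\pi}$. The paper's appendix proof is in fact terser than yours — it treats only Case 1 and omits the admissibility check for $\hat{\pi}$ and the boundary issue at $t=\tau$ that you correctly flag.
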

	\begin{proof}
		The proof is given in Appendix \ref{proof_ver_them}.
	\end{proof}
	From Theorem \ref{ver_them}, we can see that the optimal control is presented in the form of feedback control, which can generate open-loop control. Beside, the mean of the optimal strategy is the same as that of the classical Merton strategy. This suggests that to obtain an exploratory strategy whose mean aligns with that of the classical counterpart, we must carefully design our model, particularly the primary temperature function.	
	Define the relative exploratory cost function as
	$$C^{\gamma}(t,w):=\Bigg|\frac{\mathbb{E}[U_p(W_T^{\hat{u}})\mid W_t=w]-\mathbb{E}[U_p(W_T^{\hat{\pi}})\mid W_t=w]}{\mathbb{E}[U_p(W_T^{\hat{u}})\mid W_t=w]}\Bigg|,$$
where $\hat{u}$ is the solution of Problem (\ref{pb:classical}).	$C^{\gamma}$ represents the difference between the expected utility of terminal wealth under the optimal strategy in the classical and exploratory versions. According to Theorem \ref{ver_them}, we have
	\begin{corollary}\label{coro}
		The relative exploration cost of the utility problem due to the lack of information about market is
		$$C^{\gamma}(t,w) =\Big|1-\exp\Big\{-\frac{p\gamma}{2}\int_{t}^{T}\frac{ds}{f(s)} \Big\}\Big|.$$
	\end{corollary}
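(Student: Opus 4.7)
The plan is to compute each of the two expected utilities directly and then form the quotient. First I would write the wealth SDE obtained by plugging the optimal Gaussian policy
$\hat{\pi}_s \sim \mathcal{N}(m, \Sigma(s))$ with $m=\frac{\mu-r}{\sigma^2(1-p)}$ and $\Sigma(s)=\frac{\gamma}{(1-p)\sigma^2 f(s)}$ into the exploratory wealth equation \eqref{eq:ep_wealth}:
\begin{equation*}
dW_s^{\hat{\pi}} = \bigl[r+(\mu-r)m\bigr]W_s^{\hat{\pi}}\,ds + \sigma\sqrt{m^2+\Sigma(s)}\,W_s^{\hat{\pi}}\,dB_s,
\end{equation*}
so $W_T^{\hat{\pi}}$ is a (time-inhomogeneous) geometric Brownian motion and $(W_T^{\hat{\pi}})^p$ is log-normal conditionally on $W_t=w$.

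Next I would compute $\mathbb{E}[U_p(W_T^{\hat{\pi}})\mid W_t=w]=\frac{1}{p}\mathbb{E}[(W_T^{\hat{\pi}})^p]$ via the standard moment-generating-function identity for Brownian integrals. The resulting drift in the exponent is $p\int_t^T\bigl[r+(\mu-r)m+\tfrac{p-1}{2}\sigma^2(m^2+\Sigma(s))\bigr]ds$. Substituting the values of $m$ and $\Sigma(s)$, the $m$-dependent terms collapse to the familiar Merton constant $r+\tfrac{(\mu-r)^2}{2\sigma^2(1-p)}$, while the $\Sigma$-contribution yields the extra term $-\tfrac{p\gamma}{2}\int_t^T\tfrac{ds}{f(s)}$. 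Hence
\begin{equation*}
\mathbb{E}\bigl[U_p(W_T^{\hat{\pi}})\mid W_t=w\bigr]=\frac{w^p}{p}\exp\!\Big\{p(T-t)\Bigl[r+\tfrac{(\mu-r)^2}{2\sigma^2(1-p)}\Bigr]-\tfrac{p\gamma}{2}\int_t^T\tfrac{ds}{f(s)}\Big\}.
\end{equation*}

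Then I would recall from the classical benchmark that $\mathbb{E}[U_p(W_T^{\hat{u}})\mid W_t=w]=V^{\text{cl}}(t,w)=\frac{w^p}{p}\exp\!\bigl\{p(T-t)\bigl[r+\tfrac{(\mu-r)^2}{2\sigma^2(1-p)}\bigr]\bigr\}$, so the exploratory expected utility differs from the classical one exactly by the factor $\exp\{-\tfrac{p\gamma}{2}\int_t^T ds/f(s)\}$. Forming the ratio inside the definition of $C^{\gamma}$ gives $1-\exp\{-\tfrac{p\gamma}{2}\int_t^T ds/f(s)\}$, and the absolute value accounts uniformly for both signs of $p$ (so $C^\gamma\ge 0$ regardless of whether $0<p<1$ or $p<0$).

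The only non-routine point I anticipate is justifying the finiteness of $\mathbb{E}[(W_T^{\hat{\pi}})^p]$ and hence the validity of the moment formula: for $p<0$ this is automatic since $(W_T^{\hat{\pi}})^p$ is bounded by the trajectory-wise estimates implicit in Case 2 of Proposition \ref{coro_wellpose} (i.e., we must work on the interval $(\tau,T]$ where $f>0$), while for $0<p<1$ one checks integrability using the fact that $f$ is continuous and strictly positive on $[t,T]$ (so $\Sigma$ is bounded), making the Doléans-Dade exponential a true martingale with all moments. Once this is in place, the computation is purely algebraic and the stated formula follows.
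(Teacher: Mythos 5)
Your computation is correct and is exactly the direct argument the authors have in mind; the paper itself omits the proof as ``trivial,'' and your verification that the mean terms collapse to the Merton constant $r+\tfrac{(\mu-r)^2}{2\sigma^2(1-p)}$ while the variance term $\Sigma(s)=\tfrac{\gamma}{(1-p)\sigma^2 f(s)}$ contributes precisely $-\tfrac{p\gamma}{2}\int_t^T f(s)^{-1}\,ds$ to the exponent checks out. Your added remark on the integrability of $(W_T^{\hat{\pi}})^p$ (deterministic, bounded volatility on the interval where $f>0$) is a sound justification of the moment formula that the paper does not bother to record.
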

	 \begin{proof}
	 	The proof is trivial. We omit it.
	 \end{proof}
	 Next, we investigate the convergence of the solution and strategy as $\gamma \to 0^+$ given the well-posedness. We focus exclusively on the case where $0<p<1$, as this is the scenario in which well-posedness is guaranteed and a semi-closed strategy exists, regardless of the choice of market parameters. In the following discussion, we will use the subscript $\gamma$ to indicate the dependence on $\gamma$. We first present the following lemma, which is beneficial for ensuring convergence.
	 
	 \begin{lemma}\label{pro_conv}
     Consider $0<p<1$.
	 	\begin{equation}
	  \lim\limits_{\gamma \to 0+}\max_{t\in [0,T]}|y^\gamma(t)-y^0(t)|=0,
	 	\end{equation}
	 	where $y^0(\cdot)$ represents the solution of ODE $(\ref{eq:ODE2})$ when $\gamma=0$,  given explicitly as 
	 	$$y^0(t)=\exp\Big\{\Big[\frac{(\mu-r)^2 p}{2\sigma^2(1-p)}+rp\Big](T-t)\Big\}.$$
	 \end{lemma}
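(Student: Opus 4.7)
The plan is to treat this as a continuous-dependence-on-parameters statement for the one-parameter family of ODEs $y^{\gamma\prime}=h^\gamma(y^\gamma)$, $y^\gamma(0)=1$, where $h^\gamma(y)=ay+b(\gamma)\log y+c(\gamma)$ with $b(\gamma)=-p\gamma/2$ and $c(\gamma)=-(p\gamma/2)\log\bigl(\sigma^2(1-p)/(2\pi\gamma)\bigr)$. The first step is to record the two elementary limits $b(\gamma)\to 0$ and $c(\gamma)\to 0$ as $\gamma\to 0^+$; the second uses $\gamma\log\gamma\to 0$ and is the reason the precise form of the temperature term was chosen. Together these yield $h^\gamma\to h^0:=ay$ uniformly on every compact subinterval of $(0,\infty)$, with a quantitative rate I would package as a function $\eta(\gamma)\to 0$. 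The limit trajectory $y^0(t)=e^{at}$ is explicit and in particular takes values in a compact interval $[m_0,M_0]\subset(0,\infty)$ on $[0,T]$.

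The main estimate is a standard bootstrap plus Gronwall argument. Fix any $\epsilon\in(0,m_0/2)$ and enlarge to $m=m_0-\epsilon$, $M=M_0+\epsilon$. Proposition \ref{them} guarantees global existence and strict positivity of $y^\gamma$ on $[0,+\infty)$ for $0<p<1$, so I can define the first exit time $\tau_\gamma=\inf\{t\in[0,T]:y^\gamma(t)\notin[m,M]\}\wedge T$. Both $y^\gamma$ and $y^0$ lie in $[m,M]$ on $[0,\tau_\gamma]$, where $h^\gamma$ is Lipschitz in $y$ with constant at most $|a|+|b(\gamma)|/m\leq L:=|a|+1$ for $\gamma$ small enough; writing $h^\gamma(y)=ay+R^\gamma(y)$ with $\sup_{y\in[m,M]}|R^\gamma(y)|\leq\eta(\gamma)$, integration and Gronwall's inequality give $\max_{[0,\tau_\gamma]}|y^\gamma-y^0|\leq T\eta(\gamma)e^{LT}$.

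For $\gamma$ small enough this upper bound is strictly less than $\epsilon$, so $y^\gamma$ cannot exit the open interval $(m,M)$ on $[0,\tau_\gamma]$; by continuity and the definition of the exit time this forces $\tau_\gamma=T$. The Gronwall estimate therefore propagates to the whole interval, yielding $\max_{[0,T]}|y^\gamma-y^0|\leq T\eta(\gamma)e^{LT}\to 0$, which is the conclusion (and, as a byproduct, an explicit rate).

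The main obstacle I anticipate is the failure of the classical continuous-dependence theorems due to the singularity of $\log y$ at $y=0$: the vector field $h^\gamma$ is not globally Lipschitz, and in principle a perturbed trajectory could be pulled toward the boundary where the perturbation $b(\gamma)\log y$ blows up. The bootstrap through $\tau_\gamma$ is precisely the device that rules this out, by using the a priori smallness of $y^\gamma-y^0$ to trap $y^\gamma$ in a compact set where $\log$ is well behaved. A secondary subtlety is that although $b(\gamma)\to 0$, the factor $\log y$ is unbounded on $(0,\infty)$, so the convergence $h^\gamma\to h^0$ must be stated locally uniformly rather than globally; all estimates must be carried out on the fixed compact set $[m,M]$ before the bootstrap closes.
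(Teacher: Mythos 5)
Your proof is correct, but it follows a genuinely different route from the paper's. The paper's argument is compactness-based: it first derives a uniform-in-$\gamma$ lower bound $y^\gamma\geq N>0$ via the pointwise inequality $b\log y+c\geq -Ky$ and the comparison theorem, then establishes uniform boundedness and equicontinuity of the family $\{y^\gamma\}$, extracts a convergent subsequence by Arzel\`a--Ascoli, and identifies the limit by passing to the limit in the integral equation $y^\gamma(t)=1+\int_0^t h^\gamma(y^\gamma(s))\,ds$ and invoking uniqueness for the limiting ODE. Your argument instead quantifies the perturbation directly: local uniform convergence $h^\gamma\to h^0$ on compacta of $(0,\infty)$ with rate $\eta(\gamma)=O(\gamma\log(1/\gamma))$, followed by Gronwall on the interval up to the exit time $\tau_\gamma$ and a bootstrap that closes the trap $y^\gamma\in(m,M)$ and forces $\tau_\gamma=T$. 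What your approach buys is an explicit convergence rate $T\eta(\gamma)e^{LT}$ and the avoidance of any subsequence extraction (the paper's proof, as written, only yields subsequential convergence and tacitly relies on the standard ``every subsequence has a further subsequence with the same limit'' completion); what the paper's approach buys is a lower bound $y^\gamma\geq N$ valid for all $\gamma>0$ rather than only for $\gamma$ small, which is reused elsewhere. Your identification of the correct obstacle --- the logarithmic singularity at $y=0$ defeating global continuous dependence --- is exactly the point both proofs must address, yours by trapping and the paper's by comparison. One small remark: you correctly take $y^0(t)=e^{at}$ with $y^0(0)=1$; the formula displayed in the lemma statement is $e^{a(T-t)}$, which is $f^0(t)$ rather than $y^0(t)=f^0(T-t)$, an apparent typo in the paper that your normalization silently fixes.
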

	 \begin{proof}
	 	The proof is given in Appendix \ref{proof_pro_conv}.
	 \end{proof}
	 Thanks to Lemma \ref{pro_conv}, we can derive the desired convergence result.
	 \begin{proposition}
      Consider $0<p<1$.
	  As $\gamma \to 0^+$, $V^\gamma\to V^{\text{cl}}$ locally uniformly , $C^\gamma \to 0$ uniformly, $\hat{\pi}^\gamma_t(\cdot|t,w) \to \delta_{\hat{u}(t,w)}(\cdot)$ weakly for any $t\in[0,T]$.
	 	
	 \end{proposition}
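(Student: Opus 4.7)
The plan is to reduce all three convergence claims to the uniform convergence $y^\gamma \to y^0$ on $[0,T]$ provided by Lemma \ref{pro_conv}, combined with the explicit semi-closed formulas supplied by Theorem \ref{ver_them} and Corollary \ref{coro}. Since we are in Case 1 ($0<p<1$), the verification theorem gives $V^\gamma(t,w)=f^\gamma(t)\,w^p/p$ with $f^\gamma(t)=y^\gamma(T-t)$ positive on $[0,T]$ for every $\gamma\ge 0$, and with $f^0(t)=\exp\{[\tfrac{(\mu-r)^2 p}{2\sigma^2(1-p)}+rp](T-t)\}$ the classical multiplier.

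First, I would establish a uniform lower bound $f^\gamma(t)\ge c_0>0$ on $[0,T]$ for all sufficiently small $\gamma>0$. Since $f^0$ is continuous and strictly positive on the compact interval $[0,T]$, there exists $m:=\min_{[0,T]}f^0>0$; by Lemma \ref{pro_conv}, $\|f^\gamma-f^0\|_\infty\to 0$, so for $\gamma$ small we have $f^\gamma\ge m/2$ uniformly in $t$. This is the key uniform estimate that drives the rest of the argument.

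Next I would handle each of the three claims:
\begin{enumerate}
\item For $V^\gamma\to V^{\text{cl}}$, observe that $V^\gamma(t,w)-V^{\text{cl}}(t,w)=(f^\gamma(t)-f^0(t))\,w^p/p$. On any compact set $K\subset[0,T]\times\mathbb{R}_+$, $w^p/|p|$ is bounded, so the uniform convergence of $f^\gamma$ to $f^0$ immediately yields local uniform convergence.
\item For $C^\gamma\to 0$, Corollary \ref{coro} gives $C^\gamma(t,w)=|1-\exp\{-\tfrac{p\gamma}{2}\int_t^T ds/f^\gamma(s)\}|$. Using the uniform lower bound $f^\gamma\ge m/2$, we obtain $\bigl|\tfrac{p\gamma}{2}\int_t^T ds/f^\gamma(s)\bigr|\le \tfrac{p\gamma T}{m}\to 0$ uniformly in $(t,w)$, so by continuity of $x\mapsto|1-e^{-x}|$ we conclude $C^\gamma\to 0$ uniformly.
\item For the weak convergence of the policy, recall from Theorem \ref{ver_them} that $\hat{\pi}^\gamma(\cdot|t,w)$ is Gaussian with mean $m^\gamma(t,w)=\tfrac{\mu-r}{\sigma^2(1-p)}=\hat{u}(t,w)$ (independent of $\gamma$) and variance $(v^\gamma)^2(t,w)=\tfrac{\gamma}{(1-p)\sigma^2 f^\gamma(t)}$. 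The uniform lower bound on $f^\gamma$ forces $(v^\gamma)^2\to 0$ as $\gamma\to 0^+$ with the mean fixed at $\hat{u}(t,w)$. It is then standard that a Gaussian with fixed mean $\hat{u}(t,w)$ and variance tending to zero converges weakly to the Dirac mass $\delta_{\hat{u}(t,w)}$; this can be shown via characteristic functions, $\mathbb{E}[e^{i\theta X^\gamma}]=e^{i\theta \hat{u}(t,w)-\theta^2 (v^\gamma)^2/2}\to e^{i\theta \hat{u}(t,w)}$, and Lévy's continuity theorem.
\end{enumerate}

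The only nontrivial step is the uniform positive lower bound on $f^\gamma$; once that is in hand, the three claims are short consequences of the explicit formulas. No compactness or viscosity-solution machinery is needed because the dimensionality reduction has already reduced the problem to scalar ODE convergence.
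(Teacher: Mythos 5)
Your proposal is correct and follows essentially the same route as the paper: the weak convergence is proved by exactly the same characteristic-function computation, and the remaining two claims (which the paper dismisses as ``straightforward'') are handled by the explicit formulas from Theorem \ref{ver_them} and Corollary \ref{coro} just as you do. Your uniform positive lower bound on $f^\gamma$ is a valid way to get there; note it is in fact already available from the paper's proof of Lemma \ref{pro_conv}, which establishes $\min_{0\le t\le T}y^\gamma(t)\ge N>0$ for all $\gamma>0$ by an ODE comparison argument, so you could cite that constant directly instead of re-deriving the bound from the convergence.
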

	 \begin{proof}
	 	The characteristic function of $\hat{\pi}^\gamma_t$ is given by
	 	$$\psi^\gamma(\xi)=\int_{\mathbb{R}}\exp(i\xi u)\hat{\pi}_t^\gamma(u)du=\exp\Big\{\frac{i\xi(\mu-r)}{\sigma^2(1-p)}-\frac{\gamma \xi^2}{2(1-p)\sigma^2f^\gamma(t)}\Big\}.$$
	 	By Lemma \ref{pro_conv}, for any $t\in [0,T]$, as $\gamma\to 0^+$
	 	$$\psi^\gamma(\xi)\to\exp\Big\{\frac{i\xi(\mu-r)}{\sigma^2(1-p)}\Big\} , $$
	 	which is the characteristic function of $\delta_{\hat{u}(t,w)}$. Thus, $\hat{\pi}^\gamma_t(\cdot|t,w) \to \delta_{\hat{u}(t,w)}(\cdot)$ weakly for any $t\in[0,T]$. The remaining proofs are straightforward.
	 \end{proof}
	\subsection{The Case where $\beta=3$}
	We consider the exploratory HJB equation with respect to Tsalli entropy for $\beta =3$.
We make an ansatz $v(t,w)=f(t)w^p/p$ and assume $f> 0$ (which will be verified later). Plugging the expression of $v$ into the optimal strategy (\ref{pi_tsalli}), we have
	\begin{equation}\label{pi_v}
		\hat{\pi}(u|t,w)=\Big(\frac{2}{3\gamma w^p }\Big)^{\frac{1}{2}}\Big(\varphi(t,w)+(\mu-r)w^pf(t)u+ \frac{1}{2} \sigma^2(p-1)w^pf(t)u^2+\frac{\gamma w^p}{2}\Big)_+^\frac{1}{2}.
	\end{equation}
	For simple notations, we define
	\begin{equation*}
		\begin{dcases}
			A=\frac{1}{2} \sigma^2(1-p)w^pf >0,\\
			F=\Big(\frac{2}{3\gamma w^p }\Big)^{\frac{1}{2}}\sqrt{A}>0,\\
			R = \frac{\varphi}{A}+\frac{\gamma w^p}{2A}+\frac{(\mu-r)^2w^pf^2}{2\sigma^2(1-p)A}>0,\\
			\theta=\frac{\mu-r}{\sigma^2(1-p)}.
		\end{dcases}
	\end{equation*}
	After tedious calculations, (\ref{pi_v}) can be simplified to
	\begin{equation*}
		\hat{\pi}(u|t,w)=F\sqrt{\Big(R^2-(u-\theta)^2\Big)},\quad \theta-R \leq u \leq \theta +R.
	\end{equation*}
	By normalized condition $\int_{\mathbb{R}}\hat{\pi}(u|t, w)du=1$, we can get
	$$\varphi(t,w)=\frac{\sqrt{6\gamma w^pA}}{\pi }- \frac{\gamma w^p}{2}-\frac{(\mu-r)^2w^pf^2}{2\sigma^2(1-p)}. $$
	Similarly, we can compute
	\begin{equation}\label{pi_moment}
		\begin{dcases}
		\int_\mathbb{R}u \hat{\pi}(u)du= \theta,\\
		\int_\mathbb{R}u^2 \hat{\pi}(u)du=\frac{FR^4\pi}{8}+\theta^2,	\\
		\int_\mathbb{R}\hat{\pi}^3(u)du=\frac{3F^3R^4 \pi}{8}.
		\end{dcases}
	\end{equation}

	Substituting (\ref{pi_moment}) into the exploratory HJB equation (\ref{tsallis_hjb}) and performing tedious calculations, we can obtain the ODE
	\begin{equation}\label{ode_tsalli}
	 \frac{f'(t)}{p}+\Big[r+\frac{1}{2}\frac{(\mu-r)^2}{\sigma^2(1-p)}\Big]f(t)-\frac{\sqrt{3(1-p) \sigma^2\gamma}}{2\pi}\sqrt{f(t)}+\frac{\gamma}{2}=0,
	\end{equation}
	with terminal condition $f(T)=1$. Let $y(T-t)=f(t)$, then ODE (\ref{ode_tsalli}) is equivalent to 
	\begin{equation}\label{reverse_ode_tsalli}
		y'= ay+b\sqrt{y}+c,\quad y(0)=1,
	\end{equation}
	where 
	\begin{equation*}
		\begin{dcases}
			a=p\Big[r+\frac{1}{2}\frac{(\mu-r)^2}{\sigma^2(1-p)}\Big],\\
			b=-\frac{p\sqrt{3(1-p) \sigma^2\gamma}}{2\pi},\\
			c=\frac{\gamma p}{2}.
		\end{dcases}
	\end{equation*}
	Considering (\ref{reverse_ode_tsalli}), we have
	\begin{proposition}\label{prop_tsalli_beta3}~
	\begin{enumerate}
		\item When $0<p<1$, the ODE $(\ref{reverse_ode_tsalli})$ admits a unique \textbf{positive} solution defined on $[0,+\infty)$. 
		\item When $p<0$, the ODE $(\ref{reverse_ode_tsalli})$ admits a unique solution defined in $[0,\delta)$. $\delta$ is finite or positive infinity. When $0<\delta<+\infty$, $y(\cdot)$ decreases to $0$ at $\delta$. 
	\end{enumerate}
\end{proposition}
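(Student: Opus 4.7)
The strategy mirrors the approach used for Proposition \ref{them}: view \eqref{reverse_ode_tsalli} as the autonomous ODE $y'=g(y)$ with $g(y)=ay+b\sqrt{y}+c$, exploit that $g\in C^1((0,+\infty))$ so Picard--Lindel\"of yields a unique maximal solution on some interval $[0,T^*)$, and then rule out the two possible failure modes of extension (finite-time blow-up to $+\infty$, or the trajectory reaching the lower boundary $y=0$) in each regime of $p$.

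First I would derive a uniform linear majorization to exclude blow-up. From the elementary bound $\sqrt{y}\leq \tfrac{1}{2}(1+y)$, one obtains constants $\alpha,\beta$ (depending only on $a,b,c$) with $|g(y)|\leq \alpha y+\beta$ for all $y\geq 0$. A standard Gr\"onwall argument then shows that any solution stays bounded on every compact subinterval of $[0,T^*)$, so $T^*<+\infty$ can only occur via $y(t)\to 0^+$ as $t\to T^{*-}$. This reduces both cases to analyzing the behavior of the trajectory near the lower boundary.

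For Case 1 ($0<p<1$, so $b<0$ and $c=\gamma p/2>0$), I would observe that $g(0)=c>0$. By continuity there exists $\eta>0$ with $g(y)\geq c/2$ on $[0,\eta]$. If the solution approached $0$ at some time $\tau\leq T^*$, one could find an interval $(\tau-\varepsilon,\tau)$ on which $y(t)\in(0,\eta]$ and hence $y'(t)\geq c/2>0$, contradicting $y(t)\to 0^+$. Therefore $y(t)>0$ throughout $[0,T^*)$, forcing $T^*=+\infty$, and uniqueness comes from the $C^1$ theory on the open set $\{y>0\}$, which contains the entire trajectory.

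For Case 2 ($p<0$, so $b>0$ and $c<0$), the same growth bound rules out blow-up, but now $g(0)=c<0$ and the barrier argument fails. I would set $\delta:=T^*\in(0,+\infty]$; when $\delta<+\infty$, continuity forces $y(\delta^-)=0$, and continuity of $g$ yields $g(y)\leq c/2<0$ once $y$ is small enough, so $y'\leq c/2$ on a left-neighborhood of $\delta$, which makes $y$ strictly decreasing there and confirms that $y(\cdot)$ decreases to $0$ at $\delta$. The main subtlety is the non-Lipschitz behavior of $\sqrt{\,\cdot\,}$ at $0$, which could in principle spawn multiple continuations once $y$ meets $0$; this is avoided because in Case 1 the trajectory never attains $0$, while in Case 2 the maximal solution is declared only on $[0,\delta)$, so that $C^1$ uniqueness on $\{y>0\}$ is sufficient.
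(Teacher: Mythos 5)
Your proof is correct. Note that the paper itself omits the proof of Proposition \ref{prop_tsalli_beta3}, deferring to the proof of Proposition \ref{them}, which proceeds by an exhaustive phase-line case analysis: it enumerates the sign of $a$ and the number and location of the zeros of the right-hand side, and in each sub-case tracks whether the trajectory converges monotonically to an equilibrium or escapes to $+\infty$ (the latter excluded by the linear bound $h(y)\le Cy+C$ and the comparison theorem). Your route shares the skeleton (Picard--Lindel\"of on $\{y>0\}$, the escape-from-compacta lemma, and a linear majorization with Gr\"onwall to rule out finite-time blow-up), but it replaces the case enumeration with a single barrier argument at the lower boundary driven by the sign of $g(0)=c=\gamma p/2$. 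This is arguably the more natural argument for the $\beta=3$ equation, where $g$ extends continuously to $y=0$ with $g(0)=c\neq 0$, in contrast to the $\beta=1$ case where the behavior near $y=0$ is governed by the singular term $b\log y$; your treatment of the non-Lipschitz point $y=0$ (never attained in Case 1, only approached as $t\to\delta^-$ in Case 2) is also handled correctly. What you give up relative to the paper's template is the finer qualitative classification (monotonicity throughout, identification of the limiting equilibria), but none of that is asserted in Proposition \ref{prop_tsalli_beta3}; for completeness you could add the one-line remark that a solution of a scalar autonomous ODE is monotone on its maximal interval, which upgrades your ``decreasing near $\delta$'' to ``decreasing on all of $[0,\delta)$.''
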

\begin{proof}
	The proof is similar to the proof of Proposition \ref{them}. We omit it.
\end{proof}
The result of Proposition \ref{prop_tsalli_beta3} is very similar to that of Proposition \ref{them}. That is, when $0<p<1$, the control problem is well-posed and always admits an optimal strategy. However, when $p<0$, the problem can be ill-posed. The well-posedness of the control problem with $\beta=3$ is similar to that in Proposition \ref{coro_wellpose}, while the optimal strategy is given by the following.
\begin{theorem}We consider the exploratory problem $(\ref{pb_ex})$ with $\lambda(t,w)=\gamma w^p, \beta=3$. Suppose that ODE $(\ref{ode_tsalli})$ admits a classical solution $f>0$ defined in $(\tau, T], \tau\in [0, T)$. Then the optimal distributional strategy is given by, for $t \in (\tau,  T]$,
	\begin{equation}\label{pi_beta3}
		\hat{\pi}(u|t,w)=\sqrt{\frac{\sigma^2(1-p)f(t)}{3\gamma}}\sqrt{\frac{2}{\pi}\sqrt{\frac{3\gamma}{\sigma^2(1-p)f(t)}}-\Bigg(u-\frac{\mu-r}{\sigma^2(1-p)}\Bigg)^2},
	\end{equation}
and the optimal value function is $V(t,w)=f(t)w^p/p$.
\end{theorem}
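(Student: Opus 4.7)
The argument is a verification theorem mirroring Theorem \ref{ver_them}, and my plan has three main steps. First, I would confirm that the ansatz $v(t,w)=f(t)w^p/p$, with $f$ a positive classical solution of the ODE (\ref{ode_tsalli}), satisfies the exploratory HJB equation (\ref{tsallis_hjb}) on $(\tau,T]\times\mathbb{R}_+$ with terminal condition $v(T,w)=w^p/p$. Since $f>0$ and $p<1$, one has $v_{ww}=(p-1)f(t)w^{p-2}<0$, so the quantity $A$ in the reduction is strictly positive and the derivation (\ref{pi_v})--(\ref{ode_tsalli}) can be read backwards to produce the HJB identity; in particular, this recovers $\hat{\pi}$ in (\ref{pi_beta3}) as the pointwise maximizer of the Hamiltonian predicted by the general formula (\ref{pi_tsalli}).

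Second, I would verify admissibility of $\hat{\pi}$. The key observation is that its support $[\theta-R(t),\theta+R(t)]$, with $R(t)=\frac{2}{\pi}\sqrt{3\gamma/(\sigma^2(1-p)f(t))}$, depends on $t$ only through $f(t)$ and not on $w$, so by (\ref{pi_moment}) the first and second moments of $\hat{\pi}_s$ are bounded deterministic functions of $s$ on every compact subinterval of $(\tau,T]$. Consequently the wealth SDE (\ref{eq:ep_wealth}) driven by $\hat{\pi}$ reduces to a geometric Brownian motion with bounded deterministic coefficients, so a unique strong solution $W^{\hat{\pi}}$ exists, has finite $p$-moments at every order, and the four integrability requirements in the definition of $\Pi(t,w)$ are immediate (the entropy integrability is automatic from compact support).

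Third, for an arbitrary $\pi\in\Pi(t,w)$ I would apply It\^o's formula to $v(s,W_s^\pi)$ between $t$ and $T$, localize by the stopping times $\tau_n=\inf\{s\geq t:W_s^\pi\notin(1/n,n)\}\wedge T$, take expectations, and use the HJB inequality to obtain
\[v(t,w)\geq \mathbb{E}\!\left[v(\tau_n,W^\pi_{\tau_n})+\int_t^{\tau_n}\gamma(W_s^\pi)^p\!\int_{\mathbb{R}}H_3(\pi_s(u))\,du\,ds\,\Big|\,W_t^\pi=w\right].\]
Passing to the limit $n\to\infty$ using the admissibility conditions on $\pi$, together with the bound $|v(s,w)|\leq C(1+w^p)$ on any compact time interval contained in $(\tau,T]$, yields $v(t,w)\geq J(t,w;\pi)$. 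Equality when $\pi=\hat{\pi}$ follows because $\hat{\pi}$ is the pointwise Hamiltonian maximizer and because $W^{\hat{\pi}}$ enjoys genuine GBM integrability, allowing the localization to be removed by dominated convergence.

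The step I expect to be the main obstacle is the localization/uniform-integrability argument in the $p<0$ regime. There $U_p$ is unbounded below near $0$, and when $\tau>0$ the radius $R(t)\to\infty$ as $t\downarrow\tau$, so the dynamics under $\hat{\pi}$ degenerate at the left endpoint of the admissible time interval and one must work on $[t,T]\subset(\tau,T]$ throughout. Bounding $\mathbb{E}[v(\tau_n,W_{\tau_n}^\pi)]$ uniformly in $n$ will require combining the admissibility bound on $\mathbb{E}\!\int_t^T\!\int u^2\pi_s(u)\,du\,ds$ with Doob-type estimates for the wealth process and the concavity of $w\mapsto w^p$ for $p<1$; for $0<p<1$ the estimates simplify considerably since $U_p\geq 0$ and $f$ is defined on all of $[0,T]$.
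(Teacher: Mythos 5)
Your proposal is correct and follows essentially the same route as the paper, which states this theorem without a separate proof and relies on the same verification argument it gives for Theorem \ref{ver_them} (HJB identity from the ansatz $v=f(t)w^p/p$, admissibility of the compactly supported semicircle policy, It\^o's formula with localization and passage to the limit). The only slip is notational: the half-width of the support is $R(t)=\sqrt{\tfrac{2}{\pi}}\big[\tfrac{3\gamma}{\sigma^2(1-p)f(t)}\big]^{1/4}$, whereas the expression you wrote for $R(t)$ is in fact $R^2(t)$; this does not affect the argument.
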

Under the Tsallis entropy, the optimal distribution strategy is no longer a normal distribution. Instead, it is the so-called Wigner semicircle distribution defined on a compact support. Additionally, it can be regarded as a scaled Beta distribution. Specifically, assume that $Z$ follows a Beta distribution with shape parameters all $3/2$, i.e., $Z\sim \text{Beta}(3/2,3/2)$, then the random variable
$$\sqrt{\frac{2}{\pi}}\Big[\frac{3\gamma}{\sigma^2(1-p)f(t)}\Big]^{\frac{1}{4}}Z-\sqrt{\frac{2}{\pi}}\Big[\frac{3\gamma}{\sigma^2(1-p)f(t)}\Big]^{\frac{1}{4}}+\frac{\mu-r}{\sigma^2(1-p)}$$
has density function (\ref{pi_beta3}).

 Further calculation gives us its mean and variance of the optimal strategy as $\frac{\mu-r}{\sigma^2(1-p)}$ and $\frac{1}{2\pi}\sqrt{\frac{3\gamma}{\sigma^2(1-p)f(t)}}$, respectively.
An interesting observation is that the mean of the optimal distributional strategy remains the same as the classical optimal strategy (\ref{cl_strategy}).
\section{RL Algorithm Design}\label{sec:RL}
	In this section we mainly focus on designing the reinforcement learning algorithm that corresponds to the Shannon entropy ($\beta=1$) and $0<p<1$. The other case where $\beta=3$ is similar and we leave it for interested readers. Unlike Dai et al. \cite{dai2023learning} and Jiang et al. \cite{jiang2022reinforcement} who adopt the MSTDE approach (mean square temporal difference error), we use the new actor-critic methods for continuous time developed by Jia and Zhou \cite{jia2022policy,jia2022policy_a}.
	\subsection{Parametrization of value function and policy}
	Thanks to Theorem \ref{them}, we introduce the following parameterization for optimal value function:
	\begin{equation}
	V^\theta(t,w)=f^\theta(t)\frac{w^p}{p},
	\end{equation}
	where $\theta$ represents the parameters of $V$. Since $f>0$ and $f(T)=1$, we choose the following form of function:
	$$f^\theta(t)=\exp\Big(\sum_{i=1}^{n}\theta_i(T-t)^i\Big).$$
	$\theta=(\theta_1,\theta_2,\cdots,\theta_n)\in R^n$. The benefits of this choice are shown in subsequent subsections. 
	\begin{remark}
		An alternative way is to engage neural networks. For example,
		\begin{equation*}
			f^\theta(t)=\exp\Big(NN^\theta(t)\Big),
		\end{equation*}
		where $NN^\theta$ is a neural network. 
	\end{remark}Similarly, we parameterize our policy as\begin{equation}\label{para_pi}
		\pi^\varphi(u|,t,w)\sim \mathcal{N}\Big(\frac{\varphi_1}{1-p}, \frac{\gamma e^{\varphi_2}}{(1-p)f^\theta(t)}\Big),
	\end{equation}
	where $\varphi=(\varphi_1,\varphi_2)\in {\mathbb R}^2$.
	
	\subsection{Data generation}
 Consider an equal partition of the time interval $[0,T]$ given by $0=t_0<t_1<\cdots<t_{N}=T$ with $t_i=i\Delta t$. Assume that we have the stock price data $\{S_{t_i}\}_{i=0}^{N}$ that is either collected from the real market or generated by simulation according to (\ref{eq:GBM}). At each time $t_i$ with $ i=0,\cdots,N-1$, we sample the probability density function $\pi_{t_i}^\varphi$ to obtain an action $u_{t_i}$, the wealth $W_{t_{i+1}}$ can be computed by\begin{equation}\label{compute_wealth}
		W_{t_{i+1}}-W_{t_i}=rW_{t_i}\Delta t +u_{t_i}W_{t_i}\frac{S_{t_{i+1}}-S_{t_i}}{S_{t_i}}.
	\end{equation}
	Continuing this procedure, we obtain a full trajectory of states denoted by $(t_i,u_{t_i},W_{t_i})_{i=0}^N$ under the policy $\pi$, which is called an episode in the RL literature.
	
	\subsection{Policy Evaluation}
	Given a policy $\pi^\varphi$, we collect a sequence of data on the dynamics of wealth. Now we would utilize this data set to estimate the value function $V^\theta$. That is, choose parameters $\theta$ such that $V^\theta$ best approximate $V$. We adopt the martingale loss method proposed by Jia and Zhou \cite{jia2022policy_a} to estimate $\theta$, which is an offline algorithm that requires a full trajectory. For simple notation, we let $\mathcal{H}(\pi)=\int_{\mathbb{R}}H(\pi(u))du$. Denote
	$$M_t:=V^\theta(t,W_t^\pi)+\gamma\int_{0}^{t}(W^\pi_s)^p\mathcal{H}(\pi_s)ds,$$
	which is a martingale. Let $M^\theta$ be the parameterized version of $M$. The martingale loss function is 
	\begin{equation*}\label{eq:ML}
		\begin{split}
				\text{ML}(\theta)&=\frac{1}{2}E\Big[\int_{0}^{T}|M^\theta_T-M_t^\theta|^2dt\Big]\\
				&=\frac{1}{2}E\int_{0}^{T}\Big[U_p(W_T^{\pi^\varphi})-V^\theta(t,W_t^{\pi^\varphi})+\gamma\int_{t}^{T}(W_s)^p\mathcal{H}(\pi_s^\varphi)ds\Big]^2dt\\
				&\approx:\frac{1}{2}E\Big[\sum\limits_{i=0}^{N-1} D_i^2\Delta t\Big],
		\end{split}
	\end{equation*}
where 
$$D_i=U_p(W_T)-V^\theta(t_i,W_{t_i})+\gamma\sum\limits_{j=i}^{N-1}(W_{t_j})^p\mathcal{H}(\pi_{t_j}^\varphi)\Delta t.$$
We want to minimize the martingale loss function (\ref{eq:ML}) with respect to $\theta$. The gradient of martingale loss function is 
\begin{equation}\label{grad_theta}
	\frac{\partial}{\partial \theta}\text{ML}(\theta):\approx -E\Big[\sum\limits_{i=0}^{N-1} \Big(D_i\times\frac{\partial}{\partial \theta}V^\theta_i\Big)\Delta t\Big],
\end{equation}
where 
\begin{equation}\label{exp_PE}
	\begin{split}
		\frac{\partial}{\partial \theta}V^\theta_i&=\frac{\partial}{\partial \theta}V^\theta(t_i,W_{t_i})\\
		&=V^\theta(t_i,W_{t_i})\cdot [T-t_i,(T-t_i)^2,\cdots,(T-t_i)^n].
	\end{split}
\end{equation}
Thus, we can update $\theta$ using stochastic gradient descent (SGD).
\begin{remark}
	It is not necessary to wait until the policy evaluation is complete before performing the policy improvement. Instead, we can perform one round of value updates during policy evaluation and then proceed directly to policy improvement based on updated values.
\end{remark}

\subsection{Policy Update}
	Following Jia and Zhou \cite{jia2022policy}, we first compute the policy gradient: \begin{equation}\label{grad_varphi}
		\begin{aligned}
			\frac{\partial}{\partial \varphi}V^{\pi^\varphi}(0,w_0)&=E\Big\{
		\int\limits_{0}^{T}\frac{\partial \log\pi^{\varphi}_s}{\partial\varphi}\left[dV_s^{\pi^{\varphi}}+\gamma(W_s)^p\mathcal{H}(\pi_s^\varphi)ds\right]+\gamma(W_s)^p\frac{\partial}{\partial \varphi}\mathcal{H}(\pi_s^\varphi)ds\Big\}\\
		&\approx : E\sum\limits_{i=0}^{N-1}G_i,
		\end{aligned}
	\end{equation}
	where
	\begin{equation*}
		G_i:= \frac{\partial\log\pi^\varphi_{t_i}}{\partial\varphi}\Big[V^\theta_{t_{i+1}}-V^\theta_{t_i}+\gamma(W_{t_i})^p\mathcal{H}(\pi^\varphi_{t_i})\Delta t\Big]+\gamma(W_{t_i})^p\frac{\partial}{\partial\varphi}\mathcal{H}(\pi^\varphi_{t_i})\Delta t.
	\end{equation*}
	According to (\ref{para_pi}), we can compute that
	\begin{equation}\label{exp_PG}
		\begin{dcases}
			\frac{\partial\log \pi^\varphi_{t_i}}{\partial\varphi_1}&=\frac{e^{-\varphi_2}f^\theta(t_i)}{\gamma}\Big(u_{t_i}-\frac{\varphi_1}{1-p}\Big),\\
				\frac{\partial\log \pi^\varphi_{t_i}}{\partial\varphi_2}&=\frac{(1-p)e^{-\varphi_2}f^\theta(t_i)}{2\gamma}\Big(u_{t_i}-\frac{\varphi_1}{1-p}\Big)^2-\frac{1}{2},\\
				\frac{\partial \mathcal{H}( \pi^\varphi_{t_i})}{\partial\varphi_1}&=0,\\
				\frac{\partial \mathcal{H}( \pi^\varphi_{t_i})}{\partial\varphi_2}&=\frac{1}{2}.
		\end{dcases}
	\end{equation}
	Equations (\ref{exp_PE}) and (\ref{exp_PG}) demonstrate the advantages of setting the parameterized function as an exponential function: it simplifies calculations and saves computational resources. After obtaining the policy gradient, we can update $\varphi$ using gradient ascent. Finally, we summarize all the results in Algorithm \ref{alg}.
		\begin{algorithm}[H]
		\caption{Off-line Reinforcement Learning Algorithm}
		\label{alg}
		\begin{spacing}{1.3}
			\begin{algorithmic}[H]
				\REQUIRE Initial wealth $w_0$, stock price data, risk-free rate $r$, investment horizon $T$, time step $\Delta=T/N$, exploration coefficient $\gamma$, number of iterations $M$, number of mini-batches $\tilde{N}$, learning rate $l_\theta,l_\varphi$.
				\FOR{$k=1$ to $M$ }
				\FOR{$i=0$ to $N-1$ }
				\STATE Sample $\epsilon_i\sim\mathcal{N}(0,I_{\tilde{N}})$.
				
				\STATE Compute the action $u_{t_i}=\frac{\varphi_1}{1-p}+\sqrt{\frac{\gamma \exp(\varphi_2)}{(1-p)f^\theta(t_i)}} \epsilon_i$.
				\STATE Compute the wealth $W_{t_{i+1}}$ using (\ref{compute_wealth}).
			
				\ENDFOR
				\STATE Obtain $\tilde{N}$ full sample trajectories $[(t_i,u_{t_i},W_{t_i}),i=0,\cdots N]$.
				\STATE \textbf{Policy Evaluation}
				\STATE Compute mini-batch gradient $\Delta \theta$ by (\ref{grad_theta}).
		 	    \STATE Update $\theta \leftarrow \theta-l_\theta\Delta\theta$.\\
				\STATE \textbf{Policy Update}
				\STATE Compute mini-batch gradient $\Delta \varphi$ by (\ref{grad_varphi}).
				\STATE Update $\varphi \leftarrow \theta+l_\varphi\Delta\varphi$.\\ 
				
				\ENDFOR
			\end{algorithmic}
		\end{spacing}

	\end{algorithm}
	
	\section{Simulation Result}\label{sec:simulation}
We conduct the numerical experiments according to Algorithm \ref{alg}. We set the default market parameters as shown in Table \ref{tab:parameters} and generate data during the training process using these parameters.
	\begin{table}[htbp]
	\centering
	\caption{Default parameters}
	\label{tab:parameters}
	\begin{tabular}{|c|c|c|c|c|c|c|c|c|}
		\hline
		parameter & $w_0$ & $r$ & $\mu $& $\sigma$ & $T$ &$\Delta t$&$p$&$\gamma$\\ \hline
		value & 1 & 0 & 0.2 & 0.5 & 1 & 1/250  &1/3&0.3 \\ \hline
	\end{tabular}
\end{table}
  The parameterized function $f^\theta$ is chosen to be
$$f^\theta(t)=\exp\Big(\theta_1(T-t)+\theta_2(T-t)^2\Big).$$ In each experiment, we run $M=5000$ episodes with mini-batches of 32 samples. The training parameters $(\theta,\varphi)$ are initialized to zero and the corresponding updating learning rates are set as $l_\theta=0.001$ and $l_\varphi=0.01$. \\
\indent We first focus on the convergence of the algorithm, specifically whether the key parameters
$\varphi_1$ and $\varphi_2$ converge to the corresponding true value $\frac{\mu-r}{\sigma^2} $ and $-\log \sigma^2$. The dynamics of $\varphi$ during the training process of one experiment is shown in Figure \ref{fig:varphi}, where the dashed line represents the true values.
\begin{figure}[htbp]
	\centering
	\includegraphics[width=1\linewidth]{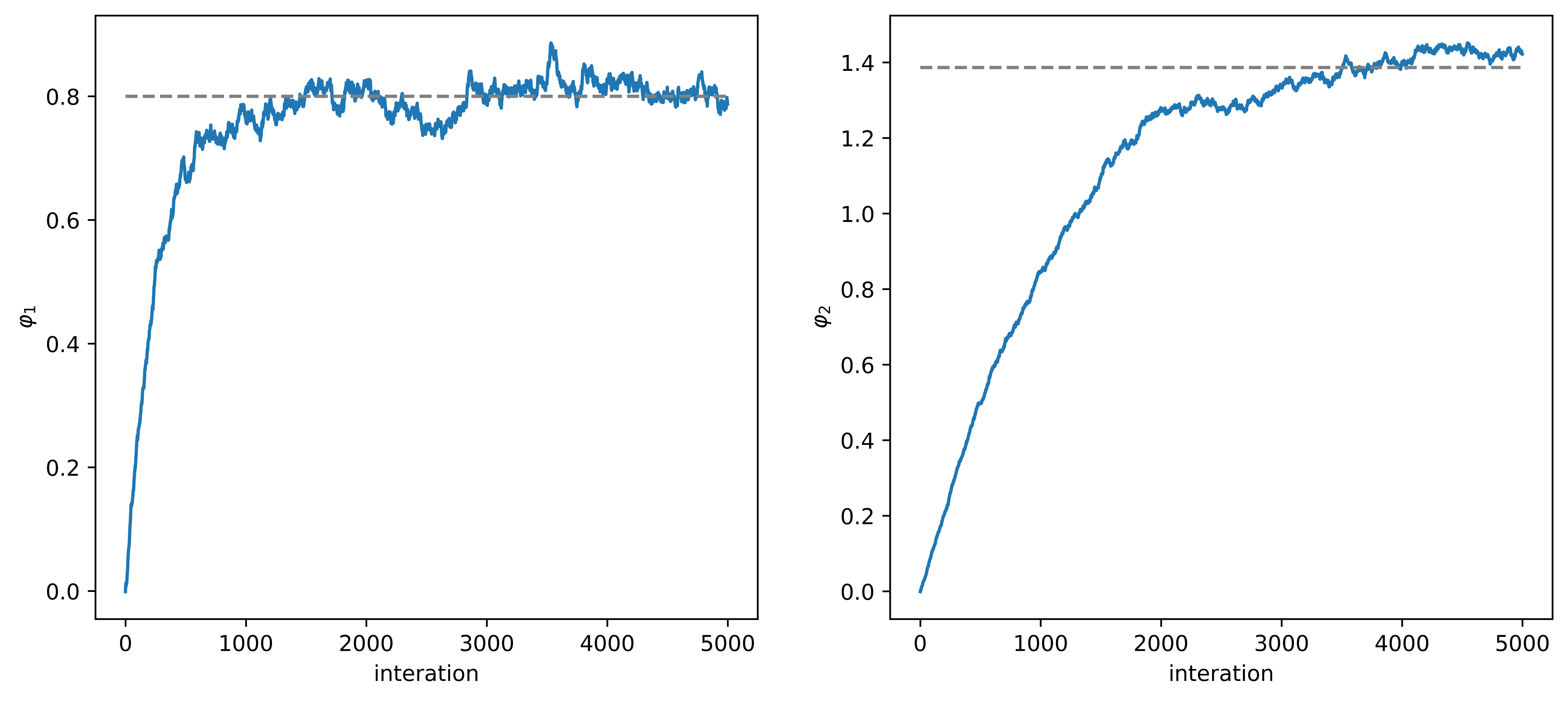}
	\caption{dynamics of $\varphi$}\label{fig:varphi}
\end{figure}
As shown in the figure, $\varphi_1$ and $\varphi_2$ converge to the true values quickly. Especially for $\varphi_1$, it converges after 1000 iterations. It is worth mentioning that the entire experiment took less than half a minute. Next, we examine the approximation of $f^\theta$ to $f$. We conduct five experiments, with the results shown in Figure \ref{fig:f}.
\begin{figure}[htbp]
	\centering
	\includegraphics[width=0.8\linewidth]{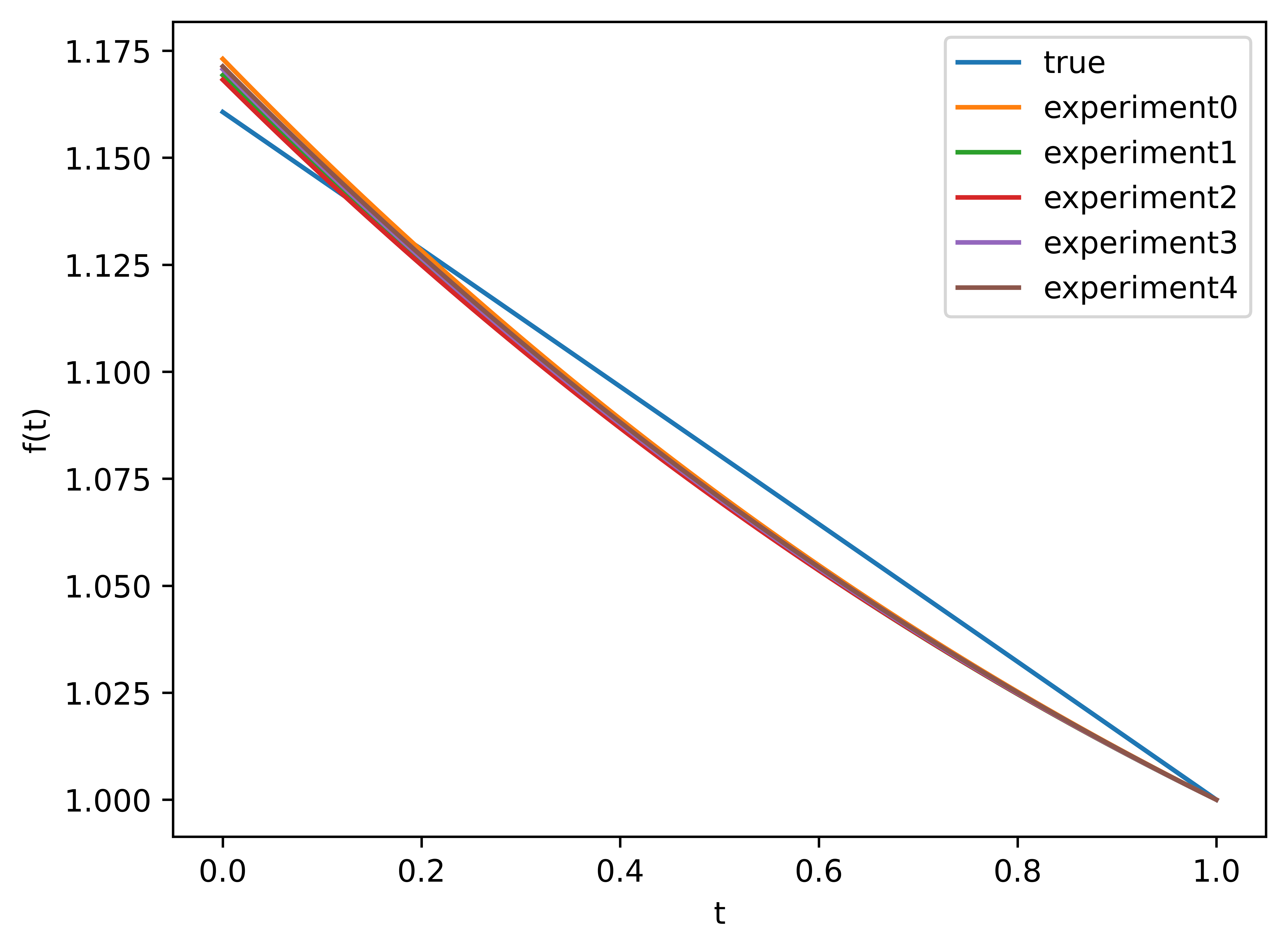}
	\caption{approximation of $f$}\label{fig:f}
\end{figure}
We can observe that even though we are using the polynomial approximation instead of a neural network, the approximation performance is quite satisfactory. \\
\indent	Secondly, to assess the robustness of the algorithm, we conduct simulations with different configurations:$\mu \in \{\pm0.1,\pm0.2,\pm0.3,\pm0.4\}$, $\sigma \in \{0.3,0.4,0.5\}$, while keeping other hyper-parameters unchanged. The numerical results are reported in Table \ref{table:diff}. We observe that regardless of parameter changes, the learning results are very close to the true values. Furthermore, when the true value of $|\varphi_1|$ is large, the error tends to increase. This is because we initialize the training parameter to 0, which requires more iterations to converge to larger true values.\\
\indent Finally, we investigate the impact of $\gamma$ on exploration performance. For different values of $\gamma$, we use the same training set for training to facilitate comparison. Performance under different $\gamma$ is shown in Figure \ref{fig:lam}. As $\gamma$ increases, the learned value of $\varphi_1$ converges more quickly to the true value, but eventually it tends to overestimate the true value. For $\varphi_2$, when iterations end, the learned value has not yet converged for small $\gamma$. This suggests that we should carefully consider the exploration weight.

\begin{figure}[htbp]
	\centering
	\includegraphics[width=1\linewidth]{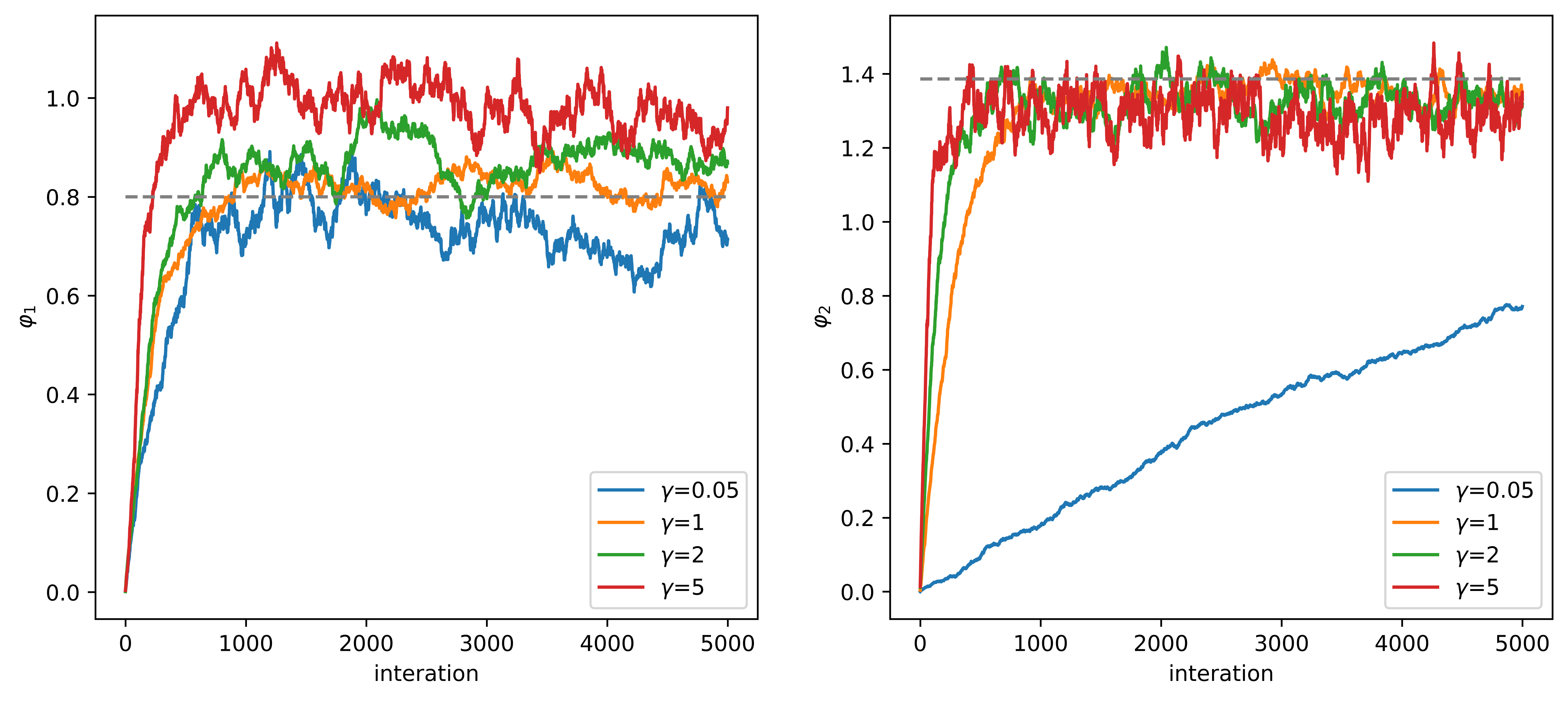}
	\caption{Performance under different $\gamma$ }\label{fig:lam}
\end{figure}

\begin{table}[ht!]
	\centering
	\caption{Convergence results under different $\mu,\sigma$}\label{table:diff}
	\begin{tabular}{cc|cc|cc}
		\toprule
		\multicolumn{1}{c}{\multirow{2}{*}{$\mu$}}&\multicolumn{1}{c|}{\multirow{2}{*}{$\sigma$}}&\multicolumn{2}{c|}{$\varphi_1$}&\multicolumn{2}{c}{$\varphi_2$}\\ \cline{3-6}
		& & True value & Learned value& True value& Learned value \\ \hline
		0.1 & 0.3 & 1.111 & 1.089 & 2.408 & 2.371 \\
	0.2 & 0.3 & 2.222 & 2.202 & 2.408 & 2.425 \\
	0.3 & 0.3 & 3.333 & 3.316 & 2.408 & 2.388 \\
	0.4 & 0.3 & 4.444 & 4.271 & 2.408 & 2.328 \\
	
	-0.1 & 0.3 & -1.111 & -1.085 & 2.408 & 2.353 \\
	-0.2 & 0.3 & -2.222 & -2.293 & 2.408 & 2.488 \\
	-0.3 & 0.3 & -3.333 & -3.326 & 2.408 & 2.327 \\
	-0.4 & 0.3 & -4.444 & -4.239 & 2.408 & 2.278 \\
	
	0.1 & 0.4 & 0.625 & 0.620 & 1.833 & 1.842 \\
	0.2 & 0.4 & 1.25 & 1.228 & 1.833 & 1.800 \\
	0.3 & 0.4 & 1.875 & 1.850 & 1.833 & 1.827 \\
	0.4 & 0.4 & 2.5 & 2.524 & 1.833 & 1.838 \\
	
	-0.1 & 0.4 & -0.625 & -0.645 & 1.833 & 1.815 \\
	-0.2 & 0.4 & -1.25 & -1.226 & 1.833 & 1.800 \\
	-0.3 & 0.4 & -1.875 & -1.838 & 1.833 & 1.824 \\
	-0.4 & 0.4 & -2.5 & -2.349 & 1.833 & 1.709 \\
	
	0.1 & 0.5 & 0.4 & 0.452 & 1.386 & 1.382 \\
	0.2 & 0.5 & 0.8 & 0.823 & 1.386 & 1.314 \\
	0.3 & 0.5 & 1.2 & 1.183 & 1.386 & 1.362 \\
	0.4 & 0.5 & 1.6 & 1.588 & 1.386 & 1.477 \\
	
	-0.1 & 0.5 & -0.4 & -0.383 & 1.386 & 1.358 \\
	-0.2 & 0.5 & -0.8 & -0.772 & 1.386 & 1.339 \\
	-0.3 & 0.5 & -1.2 & -1.240 & 1.386 & 1.367 \\
	-0.4 & 0.5 & -1.6 & -1.503 & 1.386 & 1.310 \\

		\bottomrule
	\end{tabular}
\end{table}

	\section{Conclusion}\label{sec:conclusions}
In this paper, we study the utility maximization problem with the CRRA utility function under the reinforcement learning framework, where the Tsallis entropy is introduced to encourage exploration. In the classical sense, the Merton problem is always well-posed and has a closed-form solution, regardless of how the index $p$ of the CRRA utility function is chosen. However, after introducing reinforcement learning, the problem becomes more complex. We first show that time-dependent exploration functions lead to the ill-posedness of the control problem, and are inspired to choose an appropriate primary temperature function such that a (semi-)closed-form solution becomes possible. We derive the exploratory HJB equation and solve for the optimal distributional strategy. However, a (semi-)closed-form strategy is not possible for all parameters.  When $0<p<1$ and $\beta=1$ or $3$,  the control problem is always well-posed and has a closed-form solution. When $\beta=1$ the optimal strategy distribution is a Gaussian distribution, while when $\beta=3$ the optimal strategy is a semicircular distribution defined on a compact support. This demonstrates that Tsallis entropy excels in scenarios with prevalent non-Gaussian, heavy-tailed behavior on compact support. The results of multiple assets are deferred in Appendix \ref{appen7.6}. When $p<0$, certain combinations of parameters may lead to the non-existence of an optimal strategy. Additionally, we also study the convergence of value function and strategy in the exploratory control problem to that in the classical stochastic control problem as the level of exploration approaches zero. Finally, we design a reinforcement learning algorithm and conduct numerical experiments. Although we obtain a semi-analytic value function, the approximation using polynomial basis functions yields very good results. We also study the impact of the exploration level on the convergence of trained parameters.\\
\indent There are still some open questions. First of all, is it possible to provide a unified answer to the well-posedness of such exploratory control problems, rather than relying on specific cases? Second, the parameterization of the strategy and the value function is closely tied to their analytical properties. Is it possible to set up a reinforcement learning algorithm without any prior information about the strategy and value function? We will study these issues in the future.
\section{Appendix}

\subsection{Derivation of Exploratory Wealth Process}\label{subsec:derivation}
In discrete-time setting, we have
\begin{equation*}
	\Delta W_t= [r+(\mu-r)u_t]W_t \Delta t +\sigma u_t W_t \Delta B_t.
\end{equation*}
Since $u_t$ is sampled from $\pi_t$ which is independent of the Brownian motion of the market, by the law of large number  we  obtain the first and second moments of the increment of wealth:
\begin{equation}\label{moment}
	\begin{split}
		E[\Delta W_t]=\left[r+(\mu-r)\int_{\mathbb{R}}u\pi_t(u)du\right]W_t \Delta t,\\
		E[(\Delta W_t)^2]=\left[\sigma^2\int_Ru^2\pi_t(u)du\right]W_t^2 \Delta t+o(\Delta t).\\
	\end{split}
\end{equation}
where $\lim\limits_{t\to 0}o(\Delta t)/t=0$. Inspired by (\ref{moment}), we derive the exploratory state process:

\begin{equation*}
		dW_t^\pi = \Big[r+(\mu -r)\int_{\mathbb{R}}u\pi_t(u)du\Big]W_t^\pi dt+ \sigma W_t^\pi\sqrt{\int_{\mathbb{R}}u^2\pi_t(u)du}dB_t.
\end{equation*}
 It is easy to check that the first and second moments of the exploratory process satisfy (\ref{moment}).

\subsection{Proof of Proposition \ref{them}}\label{pf_them}
\begin{proof}
	We only prove the case where $0<p<1$ (b<0), leaving the proof of the other case to the readers. We regard $h$ as a function of $(t,y)$ defined on $[0,+\infty)\times (0,+\infty)$. Note that $h\in C^1((0,+\infty)\times (0,+\infty))$, which implies $h$ is locally Lipschitz continuous on its domain. According to the Picard theorem and the extendability of solutions (cf. Kong  \cite{kong2014short}), the ODE (\ref{eq:ODE2}) has a unique solution through the initial point $(t,y)=(0,1)$ and it can be extended continuously up to the boundary of $(0,\infty)\times(0,\infty)$ (in the positive time direction). Therefore, the maximal existence interval of the solution the ODE (\ref{eq:ODE2}) can be analyzed through studying the properties of $h$. According to Lemma \ref{lemma}, we consider all possible cases as follows.
	\begin{enumerate}
		\item If $a\leq 0$, $h(\cdot)$ is strictly decreasing and there exists an unique zero denoted by $y_0$. \\
		(i) If $y_0=1$, then the unique solution is $y(t)\equiv 1$ for $t \geq 0$. \\
		(ii)If $y_0<1$, the solution will decrease to $y_0$ but never equal $y_0$. Otherwise, if there exists $t_0>0$ such that $y(t_0)=y_0$, then $y(\cdot)\equiv y_0$, which contradicts the initial condition.Thus, in this case, the maximal existence interval of $y(\cdot)$ is $[0,+\infty)$.\\
		(iii)If $y_0>1$, the solution will increase to $y_0$ but never equal $y_0$. In this case, the maximal existence interval of $y(\cdot)$ is $[0,+\infty)$.
		\item  If $a>0$ and $b[\log(- b)-\log(a)-1]+c>0$, then $h(y)>0$ for any $y>0$. The solution $y(t)$ strictly increases to $+\infty$. Note that $ h(y)\leq Cy+C$ when $y\geq 1$, where $C>0$ represents a generic constant which may differ at different places in the proof. By comparison theorem of ODE, we have $y(t)\leq Cte^{Ct}+C$. Therefore, the maximal existence interval of $y(\cdot)$ is $[0,+\infty)$.
		\item   If $a>0$ and $b[\log(- b)-\log(a)-1]+c=0$, there exists an unique zero of $h$ denote by $y_0$ and $h(y)\geq 0$.\\
		(i) If $y_0=1$, then $y(\cdot)\equiv1$.\\
		(ii) If $1\in (0,y_0)$, then the solution will increase to $y_0$ but never equal $y_0$. \\
		(iii) If $1\in (y_0,+\infty)$, then the solution will increase to $+\infty$. The maximal existence interval is $[0,+\infty)$ according to the argument in case 2.
		\item  If $a>0$ and $b[\log(- b)-\log(a)-1]+c<0$, there exists two distinct zeros of $h$ denoted by $y_1,y_2$ and $0<y_1<-b/a<y_2<\infty$.\\
		(i) If $y_i=1$, $i=1$ or 2, then $y(\cdot)\equiv 1$.\\
		(ii) If $1\in(0,y_1)$, then $y(\cdot)$ will increase to $y_1$ but never equal $y_1$.\\
		(iii) If $1\in(y_1,y_2)$, then $y(\cdot)$ will decrease to $y_1$ but never equal $y_1$.\\
		(iv) If $1\in(y_2,+\infty)$, then the solution will increase to $+\infty$ and  the maximal existence interval is $[0,+\infty)$.
	\end{enumerate}
\end{proof}

\subsection{Proof of Proposition \ref{coro_wellpose}}\label{proof_coro_wellpose}
The results in Case 1, 2.1 and 2.2 are obvious. We focus on Case 2.3.
By principle of dynamic programming, for $(t,w)\in[0,T-\delta)\times \mathbb{R}_+$ and any admissible control $\pi$, we have
\begin{equation*}
	\begin{split}
		V(t,w)&\geq E\Big\{\int_{t}^{T-\delta}\Big[\gamma( W_s^\pi)^p \int_{\mathbb{R}}\pi_s(u)	\log\pi_s(u)du\Big]ds+V(T-\delta,W_{T-\delta}^\pi)\Big\}\\
		&=  E\Big\{\int_{t}^{T-\delta}\Big[\gamma( W_s^\pi)^p \int_{\mathbb{R}}\pi_s(u)	\log\pi_s(u)du\Big]ds\Big\}
	\end{split}
\end{equation*}
where the last equality is due to $f(T-\delta)=y(\delta)=0$, then $V(T-\delta,w)=0$. Consider the following admissible controls
\begin{equation*}
	\pi^n_s(u)\sim \mathcal{N}(0,n^2),\quad t\leq s\leq T-\delta.
\end{equation*}
Then 
\begin{equation*}
	\begin{split}
		&E\Big\{\int_{t}^{T-\delta}\Big[\gamma( W_s^{\pi^n})^p \int_{\mathbb{R}}\pi_s^n(u)	\log\pi_s^n(u)du\Big]ds\Big\}\\
		=&\frac{\gamma w^p}{2}\log(2\pi e n^2)\int_{t}^{T-\delta}\exp\Big\{pr( s-t)+\Big(\frac{p^2-p}{2}\Big)\sigma^2n^2(s-t)\Big\}ds.
	\end{split}
\end{equation*}
After taking the limit with respect to $n$, we conclude that $V(t,w)=\infty$ for $(t,w)\in [0,T-\delta)\times \mathbb{R}_+$.
\subsection{Proof of Theorem \ref{ver_them}}\label{proof_ver_them}
We only prove Case 1, and the proof for the other cases are similar. It is easy to check $v\in C^{1,2}([0,T)\times R_{+})$. For all $(t,w)\in [0,T)\times R_{+}$ and $\pi\in \Pi(t,w)$, we have $W_s^\pi>0$ for $t\leq s\leq T$. Define $\tau_n=\inf\{s\geq t:\int_{t}^{s}(W_r^\pi)^{p}dr\geq n\}$. Notice that $\tau_n\to \infty$ when $n$ goes to infinity. Applying It\^o's formula to $v$ between $t$ and $T\wedge \tau_n$ gives us 
\begin{equation}
	\begin{split}
		v(T\wedge\tau_n,W_{T\wedge\tau_n}^\pi)=&v(t,w)+\int_{t}^{T\wedge\tau_n}v_t(s,W_s^\pi)+\mathcal{L}^\pi v(s,W_s^\pi)ds\\
		&+\int_{t}^{T\wedge \tau_n}\Bigg[\sigma W_s^\pi\sqrt{\int_{\mathbb{R}}u^2\pi_s(u)du}\Bigg]v_wdB_s,
	\end{split}
\end{equation}
where 
$$\mathcal{L}^\pi v(s,W_s^\pi)=\Big\{\Big[r+(\mu -r)\int_{\mathbb{R}}u\pi_s(u)du\Big]W_s^\pi \Big\}v_w+\Big\{\frac{1}{2}\sigma^2(W_s^\pi)^2\int_{\mathbb{R}}u^2\pi_s(u)du\Big\}v_{ww}. $$
Taking the expectation, we get
$$\mathbb{E} [v(T\wedge\tau_n,W_{T\wedge\tau_n}^\pi)\mid W^\pi_t=w]=v(t,w)+\mathbb{E}\Big[\int_{t}^{T\wedge\tau_n}v_t(s,W_s^\pi)+\mathcal{L}^\pi v(s,W_s^\pi)ds\mid W^\pi_t=w\Big]$$
since the diffusion term is a martingale. Noting that $v$ satisfies exploratory HJB equation, we have
$$v_t(s,W_s^\pi)+\mathcal{L}^\pi v(s,W_s^\pi) \leq \gamma(W_s^\pi)^p \int_{\mathbb{R}}\pi_s(u)du\log \pi_s(u)$$
and so
\begin{eqnarray}\label{eq:ver}
	&\mathbb{E} [v(T\wedge\tau_n,W_{T\wedge\tau_n}^\pi)\mid W^\pi_t=w]\\
    \leq &v(t,w)+\mathbb{E}[\int_{t}^{T\wedge \tau_n}[\gamma(W_s^\pi)^p \int_{\mathbb{R}}\pi_s(u)\log \pi_s(u)du]ds \mid W^\pi_t=w]. \notag
\end{eqnarray}
Note that
$$\Bigg|\int_{t}^{T\wedge \tau_n}\Big[\gamma(W_s^\pi)^p \int_{\mathbb{R}}\pi_s(u)\log \pi_s(u)du\Big]ds\Bigg| \leq \int_{t}^{T}\Bigg|\gamma(W_s^\pi)^p \int_{\mathbb{R}}\pi_s(u)\log \pi_s(u)du\Bigg|ds.$$
The right-hand-side term is integrable by the integrability condition on $\Pi(t,w)$. Then by Fatou's lemma and dominated convergence theorem we can obtain
\begin{equation*}
	v(t,w) \geq J(t,w; \pi).
\end{equation*}
It implies $v(t,w)\geq V(t,w)$. On the other hand, for $\pi=\hat{\pi}$, equality holds in (\ref{eq:ver}) and hence $v(t,w)=V(t,w)$ and $\hat{\pi}$ is optimal strategy.

\subsection{Proof of Lemma \ref{pro_conv}}\label{proof_pro_conv}

We first prove that there exists $N>0$ such that $\min\limits_{0\leq t \leq T} y^\gamma(t) \geq N$ for any $\gamma>0$.  Fix $y>0$. By direct calculation, it is easy to check that 
	$$-\frac{p\gamma}{2}\log(y)-\frac{p\gamma}{2}\log\Big(\frac{\sigma^2(1-p)}{2\pi\gamma}\Big) \geq -Ky$$
	where $K$ is a positive constant. Thus $h^\gamma(y)\geq (a-K)y$. The solution of ODE $y'=(a-K)y$ with initial condition $y(0)=1$ is given by $\tilde{y}(t)=\exp\Big((a-K)t\Big)$.
	According to comparison theorem, we have 
	$y^\gamma(t)\geq \tilde{y}(t)$ for any $\gamma>0$. Besides, since $N=\min\limits_{0\leq t\leq T}\tilde{y}(t)>0$, we conclude that $\min\limits_{0 \leq t\leq T}y^\gamma (t)\geq N$ for any $\gamma>0$.

  We proceed to show that $y^\gamma$ has a convergent subsequence in $C([0,T])$.
 Note that $y^\gamma \geq N>0$ in $[0,T]$, we obtain
	\begin{equation*}
		\begin{dcases}
			|h^\gamma(N)| \leq M,\\
			|h^\gamma(y_1)-h^\gamma(y_2)| \leq L|y_1-y_2|,\quad \forall y_1,y_2\in [N,+\infty),
		\end{dcases}
	\end{equation*}
	where $M,L$ are positive constants independent of $\gamma$ provided $\gamma$ is small. Therefore,
	\begin{equation*}
		\begin{split}
			|y^\gamma(t)|& \leq 1+\int_{0}^{t}h^\gamma(y^\gamma(s))ds\\& \leq 1+\int_{0}^{t}\Big[|h^\gamma(y^\gamma(s))-h^\gamma(N) |+|h^\gamma(N)|\Big]ds\\
			& \leq 1+MT+LNT+\int_{0}^{t}L|y^\gamma(s)|ds.
		\end{split}
	\end{equation*}
	By Gronwall inequality, we have
	$$|y^\gamma(t)|\leq (1+MT+LNT)e^{LT}<\infty,~\forall t\in[0,T].$$
	It implies that $y^\gamma(\cdot)$ is uniformly bounded. On the other hand,
	$$|y^\gamma(t)-y^\gamma(s)| \leq \int_{s}^{t}L|y^\gamma(r)|dr\leq (1+MT+LNT)e^{LT}L(t-s).$$
Thus $y^\gamma(\cdot)$ is equicontinuous. By Arzelà-Ascoli Theorem, there exists $\xi(\cdot)\in C([0,T])$ such that a subsequence of $y^\gamma(\cdot)$ which is assumed to be the sequence itself for simplicity, satisfies
$$\lim\limits_{\gamma \to 0^+}\max_{t\in [0,T]}|y^\gamma(t)-\xi(t)|=0.$$
It remains to prove $\xi(\cdot)\equiv y^0(\cdot)$. Note that
\begin{equation*}
	y^\gamma(t) =1+\int_{0}^{t}h^\gamma(y^\gamma(s))ds.
\end{equation*}
Taking the limit with respect to $\gamma$, by Dominated Convergence Theorem, we obtain
$$\xi(t)=1+\int_{0}^{t}h^0(\xi(s))ds.$$
It follows that $\xi(t)=y^0(t)$ for any $t\in[0,T]$.

\subsection{Multiple Assets}\label{appen7.6}
There are no significant technical challenges in extending our method to the multi-asset case; however, the computations become more cumbersome. We consider the case with $d$ risky assets and a bond that pays a constant risk-free rate $r$.  Let $\bm{S}_t$ be the price vector of $d$ risky assets at time $t$, i.e., $\bm{S}_t=(S_t^1,S_t^2,\cdots,S_t^d)^T$. It satisfies the following equation:
$$d\bm{S}_t=\text{Diag}(\bm{S}_t)(\bm{\mu} dt+\bm{\sigma} dB_t),$$
where $B_t \in \mathbb{R}^d$ is a standard $d-$dimensional Brownian motion, $\bm{\mu} \in \mathbb{R}^d$ is a drift vector, $\bm{\sigma} \in \mathbb{R}^{d\times d}$ is a non-degenerate volatility matrix. Denote the fractions of the total wealth invested in the stocks at time $t$ by $\bm{u}_t\in \mathbb{R}^d$. Thus a self-financing wealth process $W_t$ can be descried by
\begin{equation*}\label{mult_cl_wealth}
	dW_t = W_t[r+(\bm{\mu}-\bm{r})^{T}\bm{u}_t] dt + W_t \bm{u}_t^T \bm{\sigma}  dB_t, ~W_0=w_0>0.
\end{equation*}
where we use $\bm{r}$ to represent a vector whose components are all equal to $r$. To randomize the classical strategy, we consider the feedback exploratory policies $\pi:(t,w)\in [0,T]\times \mathbb{R}^d \mapsto \pi(\cdot|t,w)\in \mathcal{P}(\mathbb{R}^d)$, where $\mathcal{P}(\mathbb{R}^d)$ stands for all absolutely continuous probability density functions in $\mathbb{R}^d$. By the similar derivation as above, the exploratory wealth process corresponding to multiple assets is give by
$$dW_t^\pi=W_t^\pi\Big[r+\int_{\mathbb{R}^d}(\bm{\mu}-\bm{r})^{T}\bm{u}_t\pi_t(\bm{u})d\bm{u}\Big]dt + W_t^\pi\sqrt{\int_{\mathbb{R}^d}\bm{u}_t^T\bm{\sigma}\bm{\sigma}^T\bm{u}_t\pi_t(\bm{u})d\bm{u}}  dB_t.$$
Subsequently, the exploratory control problem can be defined in the same manner as in the case with a single risky asset.  Using similar method as above, we derive the exploratory HJB equation \footnote{For simplicity, we only consider the case where $0<p<1$ and $\lambda(t,w)=\gamma w^p$. }:
\begin{equation*}\label{tsallis_hjb_multiple}
	\begin{aligned}0=v_{t}+rw v_{w}+\sup\limits_{\pi}\Big\{\int_{\mathbb{R}^d}&\Big[wv_{w}(\bm{\mu}-\bm{r})^T\bm{u}+\frac{1}{2}w^{2}v_{ww}\bm{u}^T\bm{\sigma}\bm{\sigma}^T\bm{u}\Big]\pi(\bm{u})d\bm{u}\\&+\gamma w^p\int_{\mathbb{R}^d}H_\beta(\pi(\bm{u}))d\bm{u}\Big\}=0,\end{aligned}
\end{equation*}
with terminal condition $v(T,w)=w^p/p$. By solving it, we can derive optimal exploratory policies. We summarize the results as follows.
\begin{proposition}
	When $\beta=0$, the optimal policy is given by
	\begin{equation*}
		\hat{\pi}(u|t,w)\sim\mathcal{N}\Bigg(\frac{(\bm{\sigma}\bm{\sigma}^T)^{-1}(\bm{\mu}-\bm{r})}{1-p},\frac{\gamma}{(1-p)f(t)}(\bm{\sigma}\bm{\sigma}^T)^{-1}\Bigg),
	\end{equation*}
	where $f(t)\in C^1([0,T])$ is an unique solution of the following ODE:
	\begin{equation*}\label{eq:ODE_multiple1}
		\begin{dcases}
			f'(t)=\Big[-\frac{p (\bm{\mu}-\bm{r})^T(\bm{\sigma}\bm{\sigma}^T)^{-1}(\bm{\mu}-\bm{r})}{2(1-p)}-rp\Big]f(t)+\frac{p \gamma}{2} \log(f(t))+\frac{p \gamma}{2}\log\Big(\frac{|\bm{\sigma}\bm{\sigma}^T|(1-p)}{(2 \pi)^d \gamma}\Big),\\ 
			f(T)=1,
		\end{dcases}
	\end{equation*}
	where $|\cdot|$ denotes the determinant of matrix.\\
	\indent	When $\beta=3$, the optimal policy is given by
	\begin{equation*}
		\hat{\pi}(u|t,w)=\sqrt{\frac{(1-p)f(t)}{3\gamma}}\sqrt{R^2(t)-\Big(\bm{u}-\bm{\theta}\Big)^T(\bm{\sigma}\bm{\sigma}^T)\Big(\bm{u}-\bm{\theta}\Big)},
	\end{equation*}
	where 
	\begin{equation*}
		\begin{dcases}
			\bm{\theta} = \frac{(\bm{\sigma}\bm{\sigma}^T)^{-1}(\bm{\mu}-\bm{r})}{1-p},\\
			R(t) = \pi^{-\frac{1}{2}}\Big[2\Gamma\Big(1+\frac{d+1}{2}\Big)\Big]^{\frac{1}{d+1}}\Big[\frac{3\gamma |\bm{\sigma}|^2}{(1-p)f(t)}\Big]^{\frac{1}{2(d+1)}},
		\end{dcases}
	\end{equation*}
	where $\Gamma(\cdot)$ is the Gamma function. And $f(t)\in C^1([0,T])$ is an unique solution of the following ODE:
	\begin{equation*}\label{eq:ODE_multiple2}
		\begin{dcases}
			f'(t)=\Big[-\frac{p (\bm{\mu}-\bm{r})^T(\bm{\sigma}\bm{\sigma}^T)^{-1}(\bm{\mu}-\bm{r})}{2(1-p)}-rp\Big]f(t)+pK[f(t)]^{\frac{d}{d+1}}-\frac{p \gamma}{2},\\ 
			f(T)=1,
		\end{dcases}
	\end{equation*}
	where
\begin{equation*}
	\begin{split}
		K=\frac{d(1-p)}{2\pi(d+3)}\Big(\frac{3\gamma |\bm{\sigma}|^2}{(1-p)}\Big)^{\frac{1}{d+1}}\Big[2\Gamma\Big(\frac{d+3}{2}\Big)\Big]^{\frac{2}{d+1}}\Big[\frac{\Gamma(d/2)}{\Gamma((d+1)/2)}\Big].
	\end{split}
\end{equation*} 

\end{proposition}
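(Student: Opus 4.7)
The plan is to mirror the single-asset derivation with $\bm\sigma\bm\sigma^T$ playing the role of $\sigma^2$ and a change of variables absorbing the covariance structure. First I would carry out the pointwise maximization inside the multi-dimensional HJB exactly as in the derivation of (\ref{pi_tsalli}), producing
\begin{equation*}
\hat\pi(\bm u\mid t,w)=
\begin{dcases}
\Big(\tfrac{\beta-1}{\beta\gamma w^p}\Big)^{\!\frac{1}{\beta-1}}\Big(\varphi+\mathcal{A}v+\tfrac{\gamma w^p}{\beta-1}\Big)_{+}^{\frac{1}{\beta-1}},&\beta>1,\\
\exp\{\varphi+\mathcal{A}v-\gamma w^p\},&\beta=1,
\end{dcases}
\end{equation*}
where now $\mathcal{A}v=wv_w(\bm\mu-\bm r)^T\bm u+\tfrac12 w^2 v_{ww}\bm u^T\bm\sigma\bm\sigma^T\bm u$. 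The ansatz $v(t,w)=f(t)w^p/p$ with $f>0$ makes $\mathcal{A}v$ a quadratic form in $\bm u$ whose Hessian is $(p-1)f(t)w^p\bm\sigma\bm\sigma^T$, negative definite since $p<1$ and $\bm\sigma\bm\sigma^T\succ 0$. Completing the square then identifies the common mean $\bm\theta=(\bm\sigma\bm\sigma^T)^{-1}(\bm\mu-\bm r)/(1-p)$ in both cases, mirroring the classical multi-asset Merton ratio.

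For the Shannon case, $\hat\pi$ is immediately recognized as a multivariate Gaussian with mean $\bm\theta$ and covariance $\gamma[(1-p)f(t)]^{-1}(\bm\sigma\bm\sigma^T)^{-1}$. I would then plug the Gaussian moments together with the standard differential entropy $\tfrac12\log[(2\pi e)^d\det\Sigma]$ back into the HJB equation; homogeneity in $w^p$ reduces the PDE to the stated scalar ODE for $f$, with $\sigma^2$ replaced by $\det(\bm\sigma\bm\sigma^T)$ in the logarithmic term and $(\mu-r)^2/\sigma^2$ replaced by the quadratic form $(\bm\mu-\bm r)^T(\bm\sigma\bm\sigma^T)^{-1}(\bm\mu-\bm r)$. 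Existence and uniqueness of a positive solution on $[0,T]$ for $0<p<1$ then follow verbatim from Proposition~\ref{them}, since the dependence on $(\bm\mu,\bm\sigma)$ enters only through these two scalar invariants.

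For $\beta=3$ the density has a Wigner-type profile on the ellipsoid $E_t=\{\bm u:(\bm u-\bm\theta)^T\bm\sigma\bm\sigma^T(\bm u-\bm\theta)\le R(t)^2\}$. The central task is to evaluate the normalization $\int_{E_t}\hat\pi\,d\bm u=1$ and the three integrals $\int_{E_t}\bm u\,\hat\pi\,d\bm u$, $\int_{E_t}\bm u^T\bm\sigma\bm\sigma^T\bm u\,\hat\pi\,d\bm u$ and $\int_{E_t}\hat\pi^3\,d\bm u$. My plan is the linear substitution $\bm v=\bm\sigma^T(\bm u-\bm\theta)$ with Jacobian $|\det\bm\sigma|^{-1}$, which reduces every integrand to a radial function on the Euclidean ball $B_{R(t)}\subset\mathbb R^d$; a further switch to spherical coordinates turns each integral into a one-dimensional Beta integral expressible in Gamma functions. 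Solving the normalization for $R(t)$ then yields the stated closed form, while the moment computation, inserted into the HJB together with the Tsallis term, produces a single nonlinear contribution proportional to $f(t)^{d/(d+1)}$; this is precisely the ODE displayed in the proposition with the explicit constant $K$.

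The hard part is not the HJB reduction itself but the careful bookkeeping of multivariate integrals in the $\beta=3$ case: tracking the factor $|\det\bm\sigma|$ arising from the change of variables, and assembling the radial integrals of $\sqrt{R^2-|\bm v|^2}$ and its powers over $B_R$ into the correct Gamma-function expressions. Once these identities are in hand, the existence of a positive classical solution to each ODE on $[0,T]$ for $0<p<1$ follows from the same monotonicity and comparison argument as in Proposition~\ref{prop_tsalli_beta3}, and a verification theorem structurally identical to Theorem~\ref{ver_them} confirms optimality.
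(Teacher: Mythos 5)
The paper gives no proof of this proposition at all---it merely asserts that the multi-asset case follows ``by the similar derivation as above''---and your plan is exactly that derivation: pointwise maximization in the multivariate HJB, the homogeneous ansatz $v=f(t)w^p/p$, completion of the square to identify the common mean $\bm{\theta}$, identification of the Gaussian (for $\beta=1$) and the ellipsoidal Wigner profile (for $\beta=3$) with the normalization and moment integrals reduced to radial Beta/Gamma integrals, and reduction to a scalar ODE handled as in Propositions \ref{them} and \ref{prop_tsalli_beta3} and Theorem \ref{ver_them}; so your approach is correct and coincides with the paper's intended one. The only caution is the multivariate bookkeeping you yourself flag: for instance, in the $\beta=1$ case the entropy term $\tfrac12\log\bigl[(2\pi e)^d\det\Sigma\bigr]$ contributes $-\tfrac{d}{2}\log f(t)$ and powers $(1-p)^d,\gamma^d$, so a literal ``replace $\sigma^2$ by $\det(\bm{\sigma}\bm{\sigma}^T)$'' understates the $d$-dependence of the coefficients, and carrying out your plan carefully is the right way to pin down (and, if necessary, correct) the stated constants.
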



\begin{thebibliography}{HD}
\bibitem{bo2024continuous} L. Bo, Y. Huang, X. Yu, and T. Zhang,
Continuous-time q-learning for jump-diffusion models under Tsallis entropy,
\emph{arXiv preprint arXiv:2407.03888} (2024).

\bibitem{boyce2017elementary} W.E. Boyce, R.C. DiPrima, and D.B. Meade,
Elementary differential equations,
\emph{John Wiley \& Sons} (2017).

\bibitem{dai2023learning} M. Dai, Y. Dong, Y. Jia, and X.Y. Zhou,
Learning Merton's Strategies in an Incomplete Market: Recursive Entropy Regularization and Biased Gaussian Exploration,
\emph{arXiv preprint arXiv:2312.11797} (2023).

\bibitem{dai2024learning} M. Dai, Y. Sun, Z.Q. Xu, and X.Y. Zhou,
Learning to Optimally Stop a Diffusion Process,
\emph{Available at SSRN} (2024).

\bibitem{dai2009finite} M. Dai and F. Yi,
Finite-horizon optimal investment with transaction costs: A parabolic double obstacle problem,
\emph{Journal of Differential Equations} {246} (2009), 1445--1469.

\bibitem{dong2024randomized} Y. Dong,
Randomized optimal stopping problem in continuous time and reinforcement learning algorithm,
\emph{SIAM Journal on Control and Optimization} {62} (2024), 1590--1614.

\bibitem{donnelly2024exploratory} R. Donnelly and S. Jaimungal,
Exploratory control with Tsallis entropy for latent factor models,
\emph{SIAM Journal on Financial Mathematics} {15} (2024), 26--53.

\bibitem{duffie1992stochastic} D. Duffie and L.G. Epstein,
Stochastic differential utility,
\emph{Econometrica: Journal of the Econometric Society} (1992), 353--394.

\bibitem{elie2008optimal} R. Elie and N. Touzi,
Optimal lifetime consumption and investment under a drawdown constraint,
\emph{Finance and Stochastics} {12} (2008), 299--330.

\bibitem{epstein2013substitution} L.G. Epstein and S.E. Zin,
Substitution, risk aversion and the temporal behavior of consumption and asset returns: A theoretical framework,
\emph{Handbook of the fundamentals of financial decision making: Part i} (2013), 207--239.

\bibitem{guo2023exploratory} J. Guo, X. Han, and H. Wang,
Exploratory mean-variance portfolio selection with Choquet regularizers,
\emph{arXiv preprint arXiv:2307.03026} (2023).

\bibitem{han2023choquet} X. Han, R. Wang, and X.Y. Zhou,
Choquet regularization for continuous-time reinforcement learning,
\emph{SIAM Journal on Control and Optimization} {61} (2023), 2777--2801.

\bibitem{jia2022policy} Y. Jia and X.Y. Zhou,
Policy evaluation and temporal-difference learning in continuous time and space: A martingale approach,
\emph{Journal of Machine Learning Research} {23} (2022), 1--55.

\bibitem{jia2022policy_a} Y. Jia and X.Y. Zhou,
Policy gradient and actor-critic learning in continuous time and space: Theory and algorithms,
\emph{Journal of Machine Learning Research} {23} (2022), 1--50.

\bibitem{jia2023q} Y. Jia and X.Y. Zhou,
q-Learning in continuous time,
\emph{Journal of Machine Learning Research} {24} (2023), 1--61.

\bibitem{jiang2022reinforcement} R. Jiang, D. Saunders, and C. Weng,
The reinforcement learning Kelly strategy,
\emph{Quantitative Finance} {22} (2022), 1445--1464.

\bibitem{kong2014short} Q. Kong,
A short course in ordinary differential equations,
\emph{Springer} (2014).

\bibitem{lim2011optimal} B.H. Lim and Y.H. Shin,
Optimal investment, consumption and retirement decision with disutility and borrowing constraints,
\emph{Quantitative Finance} {11} (2011), 1581--1592.

\bibitem{liu2002optimal} H. Liu and M. Loewenstein,
Optimal portfolio selection with transaction costs and finite horizons,
\emph{The Review of Financial Studies} {15} (2002), 805--835.

\bibitem{merton1975optimum} R.C. Merton,
Optimum consumption and portfolio rules in a continuous-time model,
\emph{Stochastic optimization models in finance} (1975), 621--661.

\bibitem{shreve1994optimal} S.E. Shreve and H.M. Soner,
Optimal investment and consumption with transaction costs,
\emph{The Annals of Applied Probability} (1994), 609--692.

\bibitem{sutton2018reinforcement} R.S. Sutton,
Reinforcement learning: An introduction,
\emph{A Bradford Book} (2018).

\bibitem{szepesvari2022algorithms} C. Szepesvári,
Algorithms for reinforcement learning,
\emph{Springer Nature} (2022).

\bibitem{tang2022exploratory} W. Tang, Y.P. Zhang, and X.Y. Zhou,
Exploratory HJB equations and their convergence,
\emph{SIAM Journal on Control and Optimization} {60} (2022), 3191--3216.

\bibitem{tsallis1988possible} C. Tsallis,
Possible generalization of Boltzmann-Gibbs statistics,
\emph{Journal of Statistical Physics} {52} (1988), 479--487.

\bibitem{wang2020reinforcement} H. Wang, T. Zariphopoulou, and X.Y. Zhou,
Reinforcement learning in continuous time and space: A stochastic control approach,
\emph{Journal of Machine Learning Research} {21} (2020), 1--34.

\bibitem{wang2020continuous} H. Wang and X.Y. Zhou,
Continuous-time mean--variance portfolio selection: A reinforcement learning framework,
\emph{Mathematical Finance} {30} (2020), 1273--1308.

\bibitem{wu2024reinforcement} B. Wu and L. Li,
Reinforcement learning for continuous-time mean-variance portfolio selection in a regime-switching market,
\emph{Journal of Economic Dynamics and Control} {158} (2024), 104787.

\end{thebibliography}
\end{document}